\title[Information Directed Sampling and Bandits with Heteroscedastic Noise]{Information Directed Sampling and Bandits\\ with Heteroscedastic Noise}
\begin{document}
\newtheorem*{lemma*}{Lemma}
\maketitle


\begin{abstract}In the stochastic bandit problem, the goal is to maximize an unknown function via a sequence of noisy evaluations. Typically, the observation noise is assumed to be independent of the evaluation point and to satisfy a tail bound uniformly on the domain; a restrictive assumption for many applications. In this work, we consider bandits with heteroscedastic noise, where we explicitly allow the noise distribution to depend on the evaluation point. We show that this leads to new trade-offs for information and regret, which are not taken into account by existing approaches like upper confidence bound algorithms (UCB) or Thompson Sampling. To address these shortcomings, we introduce a frequentist regret analysis framework, that is similar to the Bayesian framework of \cite{RussoLearningOptimizeInformationDirected2014}, and we prove a new high-probability regret bound for general, possibly randomized policies, which depends on a quantity we refer to as \emph{regret-information ratio}. From this bound, we define a frequentist version of Information Directed Sampling (IDS) to minimize the regret-information ratio over all possible action sampling distributions. This further relies on concentration inequalities for online least squares regression in separable Hilbert spaces, which we generalize to the case of heteroscedastic noise. We then formulate several variants of IDS for linear and reproducing kernel Hilbert space response functions, yielding novel algorithms for Bayesian optimization. We also prove frequentist regret bounds, which in the homoscedastic case recover known bounds for UCB, but can be much better when the noise is heteroscedastic. Empirically, we demonstrate in a linear setting with heteroscedastic noise, that some of our methods can outperform UCB and Thompson Sampling, while staying competitive when the noise is homoscedastic. \end{abstract}

\newcommand{\UCB}{\text{UCB}}
\newcommand{\TS}{\text{TS}}
\newcommand{\F}{\text{F}}
\newcommand{\IDS}{\text{IDS}}
\newcommand{\DIDS}{\text{DIDS}}

\section{Introduction}
In the stochastic bandit problem one seeks to maximize an unknown function through a sequence of noisy evaluations. At each iteration, an algorithm chooses a point of the domain, trying to maximize the sum of rewards obtained. This objective creates a \emph{dilemma of exploration and exploitation}, as one needs to balance acquiring more information about the unknown function and choosing points which lead to a high reward. To make the problem tractable in cases where the domain is infinite or continuous, the class of possible reward functions has to be restricted, for example to linear functions or a reproducing kernel Hilbert space (RKHS). Further, an assumption commonly used to get concentration results, is that the observation noise is (conditionally) independent and satisfies a tail bound. In most of the existing literature, the noise bound is assumed to be known and independent of the evaluation point. For many applications, this is arguably restrictive in several respects, as for instance the noise distribution might indeed depend on the evaluation point, or the noise in the relevant part of the domain is much lower than the uniform upper bound. To name just a few examples, consider reinforcement learning applications, where data is generated via roll-outs, and noise inherently dependents on the exploration policy and the part of the environment explored. We also point to hyper-parameter tuning on physical systems, where the measurement noise often largely varies with the chosen parameter settings. 

In this work, we introduce \emph{stochastic bandits with heteroscedastic noise}, where we explicitly allow the noise distribution to depend on the evaluation point. With heteroscedastic noise, \emph{two key challenges} arise, which are not present in the homoscedastic case. First, to \emph{estimate} the unknown function, we would like to use the noise dependency to construct a more accurate estimator. Intuitively, one should assign more weight to observations which are less noisy, and indeed, for least squares estimation this leads to a variant known as \emph{weighted least squares}. Second, regarding \emph{exploration and exploitation}, an algorithm should in general prefer actions with less noise, as the observations obtained then result in a better estimate of the unknown function. This calls for methods which efficiently trade off information and regret, something of importance beyond the setting we discuss here. By introducing a novel frequentist regret analysis framework, we derive several algorithms which are able to balance the expected regret against the \emph{informativeness} of the action taken at each time step.

\subsection{Related Work}
For a general introduction to the multi-armed bandit problem we refer the reader to the survey of \cite{BubeckRegretAnalysisStochastic2012} and references therein, and we only point out a few key references below. The well-known upper confidence bound strategy (UCB) was first introduced for the multi-armed bandit problem by \cite{AuerFinitetimeanalysismultiarmed2002}, and later extended to linear bandits \cite[]{DaniStochasticlinearoptimization2008, Abbasi-YadkoriImprovedAlgorithmsLinear2011} and Gaussian processes \cite[]{SrinivasGaussianProcessOptimization2010, Abbasi-YadkoriOnlinelearninglinearly2012, ChowdhuryKernelizedMultiarmedBandits2017}. 
Another popular algorithm is Thompson Sampling \cite[]{ThompsonLikelihoodthatOne1933}, where frequentist regret guarantees are known in the linear case \cite[]{AgrawalThompsonSamplingContextual2013, AbeilleLinearThompsonSampling2017} and for RKHS functions \citep{ChowdhuryKernelizedMultiarmedBandits2017}.

A Bayesian analysis of Thompson Sampling was given by \cite{RussoinformationtheoreticanalysisThompson2016}, and this work also motivated \emph{Information Directed Sampling} \cite[IDS,][]{RussoLearningOptimizeInformationDirected2014}. Bayesian IDS has recently been shown to achieve state-of-the-art performance in the sequential matrix completion problem \cite[]{MarsdenSequentialMatrixCompletion2017}.

For the case of finite, linear bandits, \cite{LattimoreEndOptimismAsymptotic2017} provide an asymptotically instance-optimal algorithm, and showed that both UCB and Thompson Sampling can be arbitrarily far away from the lower bound. Our work adds to this line of research insofar as we identify another setting, where standard approaches like UCB fail to appropriately trade off information and regret. 

We are aware of only a few publications addressing heteroscedastic noise in the bandit setting. Most notably is the work of \cite{CowanNormalBanditsUnknown2015} for the case of finite, independent arms with Gaussian noise and unknown variances, where they show both finite time bounds and an asymptotically instance optimal algorithm. Some connections to the resource allocation problem have been pointed out by \cite{LattimoreLinearMultiResourceAllocation2015}, and heavy-tailed noise distributions are discussed by \cite{LattimoreScaleFreeAlgorithm2017}. While Bayesian IDS handles the complications introduced in our setting in principle, there are no frequentist regret guarantees known for this algorithm. In different context, the heteroscedastic noise problem has been studied before; for example in linear least squares regression \cite[]{AitkenIVleastsquares1936},  Gaussian process regression \cite[]{GoldbergRegressionInputdependentNoise1998, KerstingMostLikelyHeteroscedastic2007}, in active learning \cite[]{AntosActivelearningheteroscedastic2010, ChaudhuriActiveHeteroscedasticRegression2017} and Bayesian optimization \citep{AssaelHeteroscedastictreedbayesian2014}.


\subsection{Contributions}
 In this work, we formalize the stochastic bandit problem with heteroscedastic noise and address the challenges we have outlined in the introduction.
\begin{itemize}
	\item We show in Example \ref{exa: ucb and ts fail}, that the widely used UCB or Thompson Sampling strategies do not appropriately account for heteroscedasticity, and can fail to resolve even simple examples. 
	\item In Section \ref{section: general regret framework}, we show a new high-probability regret bound for randomized policies, that depends on the choice of an \emph{information gain function}, and a quantity we call \emph{regret-information ratio}. One can view this derivation as a frequentist analog of the Bayesian framework of \cite{RussoLearningOptimizeInformationDirected2014}; or as a generalization of the standard analysis of UCB and its variants \cite[]{Abbasi-YadkoriImprovedAlgorithmsLinear2011, SrinivasGaussianProcessOptimization2010}. We also introduce a new concentration inequality for supermartingales (Theorem \ref{thm: any time fan concentration}), which might be of independent interest. 
	\item Our framework motivates a frequentist version of \emph{Information Directed Sampling} (IDS), introduced in Section \ref{section: information directed sampling}. This strategy is designed to have a small \emph{regret-information ratio}, as it appears in our main result. Opposed to the Bayesian setting, where the information ratio can be directly calculated using the prior distribution, we instead define IDS to minimize a frequentist surrogate of the regret-information ratio over all sampling distributions. Moreover, we prove several properties of the regret-information ratio, which were previously only known for finite action sets. We also propose a deterministic variant of IDS, which is computationally cheaper while retaining some of the advantages of the randomized version.
	\item We extend known concentration bounds for online least squares regression \citep{Abbasi-YadkoriOnlinelearninglinearly2012} to heteroscedastic noise (Section \ref{section: least squares regregression with heteroscedastic noise}), which allows us to derive several variants of IDS in Section \ref{section: regret guarantees for ids}. Further, we prove regret bounds for IDS, which in the homoscedastic case are comparable to known bounds for UCB, but can be much better when the noise is heteroscedastic.
\item In Section \ref{section: experiments}, we empirically demonstrate that some of our approaches can outperform UCB and Thompson Sampling in a linear setting with heteroscedastic noise.
\end{itemize}
Finally, we emphasize that, even though in this work we primarily focus on bandits with heteroscedastic noise, \emph{Information Directed Sampling} and the surrounding framework are formulated more generally, and could be used to derive new algorithms for different settings as well.

\section{Problem Statement}\label{section: problem statement}

Let $f: \xX \rightarrow \RR$ be a continuous function, that maps from a compact, metric space $\xX$ to the real numbers. Using the terminology of the bandit literature, we refer to $\xX$ as the action set, and call an element $x \in \xX$ an action (or arm). Denote by $\pP(\xX)$ the set of Borel probability measures on $\xX$. Define a policy $\pi = (\pi_t)_{t\geq 1}$ to be a sequence of mappings $\pi_t :  (\xX \times \RR)^{t-1} \rightarrow \pP(\xX)$ from histories to distributions on $\xX$. We say a policy is deterministic, if $\pi_t$ is always supported on a single action. Further, let $(\epsilon_t)_{t\geq 1}$ be a real-valued noise process. Then we define a stochastic sequence $(x_t, y_t)_{t\geq 1}$ with natural filtration $\fF_t = \sigma(x_1, y_1, \dots, x_t, y_t)$, such that $x_t \sim \pi_t(x_1,y_1, \dots, x_{t-1},y_{t-1})$ is the action chosen by the policy and $y_t = f(x_t) + \epsilon_t$ is a noisy evaluation of $f(x_t)$. The goal is to find a policy, which maximizes the cumulative reward $\sum_{t=1}^T f(x_t)$ or, equivalently, minimizes the regret $R_T = \max_{x\in \xX}\sum_{t=1}^T \big( f(x) - f(x_t) \big)$. By introducing the sub-optimality gaps \linebreak $\Delta(x) = \max_{x' \in \xX}f(x') - f(x)$, we can write $R_T = \sum_{t=1}^T \Delta(x_t)$, and we define $S = \max_{x \in \xX} \Delta(x)$.

In order to derive concentration inequalities for least squares estimators, we make use of a tail bound on the noise distribution. In contrast to existing work, we allow this bound to depend on the evaluation point $x \in \xX$. Formally, let $\rho : \xX \rightarrow \RRp$ be a continuous, positive function, such that $\epsilon_t$ is conditionally $\rho(x_t)$-subgaussian, that is for all $t \geq 1$ and $\rho_t = \rho(x_t)$,
\begin{align}
\forall\; \lambda \in \RR, \quad \EE[e^{\lambda\epsilon_t} | \fF_{t-1}, x_t] \leq \exp\left(\frac{\lambda^2 \rho_t^2}{2}\right) \text{ .}\label{eq: noise assumption}
\end{align}
Note that this condition implies that the noise has zero mean, and common examples include Gaussian, Rademacher, and uniform random variables. At this point, we leave open the kind of knowledge the policy has about the noise function $\rho$; one can think of different settings, where either $\rho$ is given, $\rho(x_t)$ is observed at time $t$, or is estimated from the observations.

In what follows, we also rely on a frequentist notion of \emph{information gain}. We assume that we are given a sequence $(I_t)_{t\geq 1}$ of non-negative, continuous functions  $I_t : (\xX \times \RR)^{t-1} \times \xX \rightarrow \RRnn$, and we think of $I_t(x) = I_t(x_1, y_1, \dots, x_{t-1}, y_{t-1}, x)$ as the information gain of evaluating $x \in \xX$ at time $t$. For a given policy $\pi$, we define the \emph{maximum information gain at time $T$},
\begin{align}
\gamma_T = \esssup_{\fF_T}\sum_{t=1}^T I_t(x_t) \text{ ,} 
\end{align}
where the \emph{essential supremum} is taken over the stochastic process $(x_t,y_t)_{t=1}^T$, up to sets of measure zero. A specific example, that we explore in greater detail later, is $I_t(x) = \log(1 + {\sigma_t(x)^2}/{\rho(x)^2})$, which in the Bayesian setting with posterior variance $\sigma_t(x)^2$ corresponds to the mutual information $\II(x;f|\fF_{t-1})$. In this case our definition of $\gamma_T$ relates to the notion from \cite{SrinivasGaussianProcessOptimization2010}.

\begin{example}[Choice between low and high noise]\label{exa: ucb and ts fail} To start with, we take any bandit instance with homoscedastic noise and duplicate the set of actions. Then, we increase the amount of observation noise on each copy, such that for each action, one has the choice between a low and a high noise version. As we will see shortly, standard algorithms fail to resolve this seemingly simple example.
		
Formally, let $\sS $ be a compact set, and define $\xX = \sS \times \{\rho_1, \rho_2\} $ for $0 < \rho_1 < \rho_2$. We denote $(s,\rho') \in \xX$ for $s \in \sS$ and $\rho' \in \{\rho_1, \rho_2 \}$. Further, let  $f = f_{\theta^*} : \xX \rightarrow \RR$ be a parametrized function, such that $f_{\theta^*}(s,\rho_1) = f_{\theta^*}(s, \rho_2)$ for all $s \in \sS$. We use the last coordinate to specify the noise bound $\rho(s,\rho') = \rho'$.
	Naively, any regret guarantee using a uniform noise bound degenerates as $\rho_2 \rightarrow \infty$. This seems unnecessary, as choosing $(s,\rho_1)$ over $(s, \rho_2)$ clearly yields a more efficient solution. Unfortunately, this is not the default choice. For standard algorithms, consider the widely used UCB strategy. Here, a confidence set $C_t$ is constructed (possibly depending on $\rho$), containing the true parameter $\theta^*$ with high probability. UCB then evaluates $x_t^\UCB \in \argmax_{x \in \xX} \max_{\theta \in C_t} f_\theta(x)$. Note that the acquisition function is independent of $\rho(x_t^\UCB)$, therefore UCB has no preference of $\rho_1$ over $\rho_2$. The same is true for Thompson Sampling, where $x_t^\TS \in \argmax_{x \in \xX} f_{\tilde{\theta}}(x)$ is determined using a sample $\tilde{\theta}$ from the posterior distribution of $\theta$.
\end{example}
Beyond this simple example, one can easily come up with more complicated noise structures, which can be exploited by a learning algorithm. As another extreme case, take a linear bandit problem in $\RR^d$, and add a set of $d$ basis vectors to the action set, such that observations from these actions have no (or infinitesimal small) observation noise. By scaling the new basis vectors, one can also achieve that none of the new actions is optimal. Still, if these actions are played first, one can have vanishing regret after $d$ steps, but again UCB and Thompson Sampling fail to take these actions; compare also Example 2 of \cite{RussoLearningOptimizeInformationDirected2014} and Example 4 of \cite{LattimoreEndOptimismAsymptotic2017}. 

\section{A Frequentist Regret Framework}\label{section: general regret framework}
In this section we introduce a novel high-probability regret bound for general, possibly randomized policies.  The only assumption we will use is that $S = \max_{x \in \xX} \Delta(x)$ is a bound on the sub-optimality gaps, and we emphasize that the results of this section are independent of any assumptions on the noise process. The framework can be seen as a frequentist analog of the Bayesian analysis of \cite{RussoLearningOptimizeInformationDirected2014}. Similarly, as their bound is stated in terms of the entropy of the optimal action, our bounds depend on the total information gain $\gamma_T$, defined through the choice of information gain functions $I_t$. Alternatively, our framework can be understood as a generalization of the standard regret analysis of UCB and its variants \cite[]{Abbasi-YadkoriImprovedAlgorithmsLinear2011, SrinivasGaussianProcessOptimization2010}, and in fact, our framework recovers the usual guarantees for UCB. Similar as in the analysis for UCB, we apply Cauchy-Schwarz's inequality to separate the expected {information gain} from the expected instantaneous regret. Analogously to \cite{RussoLearningOptimizeInformationDirected2014}, this motivates the definition of the \emph{regret-information ratio} at time $t$, 
\begin{align}
\Psi_t: \pP(\xX) \rightarrow \RRnn, \quad \mu \mapsto \frac{\EE_\mu[\Delta(x)|\fF_{t-1}]^2}{\EE_\mu[I_t(x)|\fF_{t-1}]} \text{ .}
\end{align}	
For a fixed policy $\pi$ with sampling distribution $\pi_t$, we write $\Psi_t = \Psi_t(\pi_t)$, and with slight abuse of notation, we define $\Psi_t(x) = \Psi_t(\delta_x)$ as the regret-information ratio of $x \in \xX$, where $\delta_x$ denotes the Dirac delta distribution at $x$. In the next theorem, we introduce a regret bound for general, randomized policies, which depends on the regret-information ratio $\Psi_t$ and the total information gain $\gamma_T$.
\begin{theorem}[Regret Bound for Randomized Policies]\label{thm: general regret bound}
Let  $S = \max_{x \in \xX} \Delta(x)$, and let $\Psi_t, \gamma_T$ as defined above. Then, for any policy, with probability at least $1 - \delta$, at any time $T \geq 1$,  it holds that
	\begin{align*}
		 R_T &\leq  \tfrac{5}{4}  \sqrt{\sum_{t=1}^T \Psi_t \; \left(2\gamma_T + 4 \gamma_T \log \frac{2}{\delta} + 8 \gamma_T \log(4\gamma_T) + 1\right)} + 4 S \log\left(\tfrac{8\pi^2 T^2}{3\delta} (\log(T) + 1)\right) \text{ .}
\end{align*}
	If also $I_t(x_t) \leq 1$ holds for all $t \geq 1$, then, with probability at least $1-\delta$, at any time $T \geq 1$,
	\begin{align*}
	R_T &\leq\tfrac{5}{4}  \sqrt{\sum_{t=1}^T \Psi_t \; \left(2\gamma_T + 4 \log \frac{2}{\delta} + 8 \log(4) + 1\right)} + 4 S \log\left(\tfrac{8\pi^2 T^2}{3\delta} (\log(T) + 1)\right) \text{ .}
	\end{align*}
\end{theorem}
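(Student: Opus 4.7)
The plan is to split the regret into its predictable component $\hat R_T = \sum_{t=1}^T \EE_{\pi_t}[\Delta(x)\mid\fF_{t-1}]$ and its martingale-difference remainder $M_T = R_T - \hat R_T$. The predictable part will be bounded via Cauchy--Schwarz using the regret-information ratio $\Psi_t$, while $M_T$ and the discrepancy between expected and realized information gain will be controlled with the any-time Freedman-type supermartingale inequality (Theorem~\ref{thm: any time fan concentration}).

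For the predictable part, the definition of $\Psi_t$ immediately gives $\EE_{\pi_t}[\Delta(x)\mid\fF_{t-1}] = \sqrt{\Psi_t\,\bar I_t}$ with $\bar I_t = \EE_{\pi_t}[I_t(x)\mid\fF_{t-1}]$, and Cauchy--Schwarz over $t$ yields $\hat R_T \le \sqrt{\bigl(\sum_t \Psi_t\bigr)\bigl(\sum_t \bar I_t\bigr)}$. Since $\sum_t I_t(x_t)\le\gamma_T$ almost surely by definition of the essential supremum, it then suffices to bound the martingale-difference sum $\sum_t(\bar I_t - I_t(x_t))$. Its increments are bounded (by $\gamma_T$ in general, by $1$ under the extra assumption $I_t(x_t)\le 1$) and its conditional quadratic variation is bounded in terms of $\sum_t \bar I_t$ itself, so the any-time Freedman inequality produces a self-bounding estimate. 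Resolving this estimate (a quadratic inequality in $\sum_t \bar I_t$) and peeling over the unknown scale of $\gamma_T$ yields $\sum_t\bar I_t \le 2\gamma_T + 4\gamma_T\log(2/\delta_I) + 8\gamma_T\log(4\gamma_T)$; under the additional boundedness assumption the $\gamma_T$ inside the logs is replaced by $1$, recovering the second inequality of the theorem.

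For $M_T$, the increments lie in $[-S,S]$ and, since $0\le \Delta(x)\le S$, the predictable quadratic variation is controlled by $\sum_t \mathrm{Var}_{\pi_t}[\Delta(x)\mid\fF_{t-1}] \le S\,\hat R_T$. A second application of the any-time Freedman inequality, combined with a peeling argument over $T$ producing the $\log\!\bigl(\tfrac{8\pi^2 T^2}{3\delta}(\log T+1)\bigr)$ factor, gives $M_T \le \sqrt{2S\hat R_T\,L}+c\,S\,L$ for the corresponding logarithmic quantity $L$. Using the AM--GM split $\sqrt{2Sab}\le \tfrac a4 + 2Sb$, this becomes $M_T\le \tfrac14\hat R_T + c'\,S\,L$, so that $R_T = \hat R_T + M_T \le \tfrac 54 \hat R_T + c'\,S\,L$. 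Plugging in the Cauchy--Schwarz bound on $\hat R_T$ and closing with a union bound over the two concentration events ($\delta_M + \delta_I = \delta$) delivers the displayed inequalities with the precise constants $\tfrac54$, $2\gamma_T$, $4\gamma_T\log(2/\delta)$, $8\gamma_T\log(4\gamma_T)$, and $4S\log(\cdots)$.

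The main technical obstacle is the self-bounding step for $\sum_t \bar I_t$: because the variance term in the Freedman inequality already involves the quantity we are trying to bound, we must solve a quadratic inequality to isolate a clean closed form, and we must do so while keeping the guarantee uniform both in $T$ and in the a priori unknown scale $\gamma_T$. It is precisely this doubly time-uniform peeling, combined with the AM--GM split used to absorb the martingale remainder into the predictable regret, that produces the somewhat unusual constants appearing in the statement.
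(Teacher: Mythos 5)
Your proposal is correct and follows essentially the same route as the paper: the decomposition $R_T=\hat R_T+M_T$, Freedman's inequality with peeling and a union bound over $T$ to absorb $M_T$ into $\tfrac14\hat R_T$ plus the additive $4S\log(\cdot)$ term, Cauchy--Schwarz via $\Psi_t$ on the predictable part, and the anytime method-of-mixtures supermartingale bound (Theorem~\ref{thm: any time fan concentration}) with its self-bounding quadratic resolution to control $\sum_t\EE[I_t\mid\fF_{t-1}]$ by $\gamma_T$. The only cosmetic difference is that the paper obtains the general case by the conservative bound $I_t\le\gamma_t$ rather than any peeling over the scale of $\gamma_T$, but this does not change the argument.
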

We remark that the assumption $I_t \leq 1$ can be viewed as a continuity assumption on the information gain, and is not very restrictive. In fact, for the information gain functions we use later, $\gamma_T$ is sublinear in $T$, which in turn implies $I_t \rightarrow 0$, and $I_t \leq 1$ is trivially satisfied after some finite time.

The main difficulty in the proof of the general theorem is to deal with randomized policies. Indeed, for deterministic policies, we have the following, simpler version of the previous theorem.
\begin{theorem}[Regret Bound for Deterministic Policies]\label{thm: general regret bound deterministic}
	Let the sequence $(x_t)_{t=1}^T$ be generated by any deterministic policy. Then, at any time $T \geq 1$, the regret is bounded by $R_T \leq  \sqrt{\sum_{t=1}^T \Psi_t \, \gamma_T}$.

\end{theorem}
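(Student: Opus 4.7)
The plan is to exploit the fact that for a deterministic policy the sampling distribution $\pi_t$ is a Dirac mass at $x_t$, which collapses all the conditional expectations in the definition of $\Psi_t$ to pointwise evaluations at $x_t$. Concretely, $\Psi_t = \Psi_t(\delta_{x_t}) = \Delta(x_t)^2 / I_t(x_t)$ whenever $I_t(x_t) > 0$, and rearranging gives the identity $\Delta(x_t) = \sqrt{\Psi_t \, I_t(x_t)}$. This is the key observation that turns the regret bound into essentially a one-line calculation.

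From there, I would sum over $t$ and apply the Cauchy-Schwarz inequality in the form $\sum_t a_t b_t \leq \sqrt{\sum_t a_t^2}\sqrt{\sum_t b_t^2}$ with $a_t = \sqrt{\Psi_t}$ and $b_t = \sqrt{I_t(x_t)}$, yielding
$$
R_T = \sum_{t=1}^T \Delta(x_t) = \sum_{t=1}^T \sqrt{\Psi_t \, I_t(x_t)} \leq \sqrt{\Big(\sum_{t=1}^T \Psi_t\Big)\Big(\sum_{t=1}^T I_t(x_t)\Big)}.
$$
Finally, the definition of the maximum information gain gives $\sum_{t=1}^T I_t(x_t) \leq \gamma_T$ deterministically (the essential supremum reduces to an almost-sure bound, and under a deterministic policy the sequence $(x_t)$ is fully determined), which substituted above produces the claimed bound $R_T \leq \sqrt{\sum_{t=1}^T \Psi_t \, \gamma_T}$.

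The only subtlety is the degenerate case $I_t(x_t) = 0$. If $\Delta(x_t) > 0$ simultaneously, then by the convention $0/0$ or $c/0 = \infty$ we have $\Psi_t = +\infty$ and the right-hand side is vacuously infinite; if $\Delta(x_t) = 0$ as well, that round contributes nothing to $R_T$ and can be dropped from both sums without affecting the inequality. Thus there is no real obstacle: the entire argument is the pointwise identity plus one application of Cauchy-Schwarz and the definition of $\gamma_T$. Contrast this with the randomized case, where the expectations cannot be moved inside the square root so trivially and a separate martingale concentration argument (Theorem \ref{thm: any time fan concentration}) is needed — here the determinism of the policy removes that difficulty entirely.
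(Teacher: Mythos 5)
Your proof is correct and is essentially identical to the paper's own argument: collapse $\Psi_t$ to the pointwise ratio $\Delta(x_t)^2/I_t(x_t)$, apply Cauchy--Schwarz, and invoke $\sum_{t=1}^T I_t(x_t) \leq \gamma_T$. Your extra remark on the degenerate case $I_t(x_t)=0$ is a reasonable addition that the paper leaves implicit.
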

\begin{proof}
	Note that for deterministic policies, $x_t$ is predictable with respect to the filtration $\fF_t$, and therefore $\Psi_t = \frac{\Delta(x_t)^2}{I_t(x_t)}$. Hence we can write $R_T = \sum_{t=1}^T \Delta(x_t) = \sum_{t=1}^T \sqrt{\Psi_t I_t(x_t)}$ and apply Cauchy-Schwarz to find $R_T \leq  \sqrt{\sum_{t=1}^T \Psi_t \cdot \sum_{t=1}^T I_t(x_t)} \leq  \sqrt{\sum_{t=1}^T \Psi_t \gamma_T}$.
\end{proof}
We outline the proof of the general theorem, but have to defer details to Appendix \ref{app: proof of main theorem}. For brevity, we denote $\Delta_t = \Delta(x_t)$ and  $I_t = I_t(x_t)$. The central step in the proof, that allows us to apply the definition of $\Psi_t$ and Cauchy-Schwarz as in the deterministic case, is
\begin{align*}
\sum_{t=1}^T \EE[\Delta_t|\fF_{t-1}] = \sum_{t=1}^T \sqrt{\Psi_t} \sqrt{\EE[I_t |\fF_{t-1}]} \leq \sqrt{\sum_{t=1}^T \Psi_t \cdot \sum_{t=1}^T \EE[I_t|\fF_{t-1}]} \text{ .}
\end{align*}
To get there, we have to bound the sum of martingale differences $\Delta_t - \EE[\Delta_t|\fF_{t-1}]$, such that we can express the regret in terms of $\EE[\Delta_t|\fF_{t-1}]$. We attain this at the expense of an additive factor of order $\oO\left(S \log \frac{T}{\delta} \right)$ using a `peeling' argument on Freedman's inequality and the union bound on $T$.

The more challenging part of the proof is to bound the sum of expected information gains $\sum_{t=1}^T \EE[I_t|\fF_{t-1}]$ by $\gamma_T$. Again, we need to bound a martingale difference sequence $\EE[I_t|\fF_{t-1}] - I_t$,  such that we can make use of $\sum_{t=1}^T I_t \leq \gamma_T$. It is possible to apply Freedman's inequality as in the first step, or even an Azuma-Hoeffding type inequality like for example Corollary 2.7 of \cite{FanExponentialinequalitiesmartingales2015}. However, both ways would introduce a multiplicative $\sqrt{\log(T)}$ factor in our leading term through the union bound over $T$. 

To avoid this, note that $g_t = \EE[I_t|\fF_{t-1}]$ is a predictable, non-negative process, which upper bounds the martingale differences $g_t - I_t \leq g_t$. Using the method of mixtures, similar as done by \cite{Abbasi-YadkoriImprovedAlgorithmsLinear2011}, and Corollary 2.7 in \citep{FanExponentialinequalitiesmartingales2015}, we prove a new anytime concentration inequality for bounded supermartingales (Theorem \ref{thm: any time fan concentration} in Appendix \ref{app: concentration inequalites for martingales}). As a consequence, we get the following lemma, showing that, for any non-negative stochastic process $X_t$, with high-probability the sum of conditional means $\sum_{t=1}^T \EE[X_t|\fF_{t-1}]$ is not much larger than $\sum_{t=1}^T X_t$.
\begin{lemma}[Concentration of conditional mean]\label{lemma: concentration of conditinal means} Let $X_t$ be any non-negative stochastic process adapted to a filtration $\{\fF_t\}$, and define $m_t = \EE[X_t|\fF_{t-1}]$ and $M_T = \sum_{t=1}^T m_t$. Further assume that $X_t \leq b_t$ for a fixed, non-decreasing sequence $(b_t)_{t \geq 1}$ and let $(l_t)_{t\geq 1}$ be any fixed, positive sequence. Then, with probability at least $1-\delta$, for any $T \geq 1$,
	\begin{align*}
	\sum_{t=1}^T m_t - X_t &\leq \sqrt{2(b_T M_T + l_T) \log\left(\frac{1}{\delta} \frac{(b_T M_T + l_T)^{1/2}}{l_T^{1/2}}\right)} \text{ .}
	\end{align*}
	Further, if $b_T \geq 1$, with probability at least $1-\delta$ for any $T \geq 1$ it holds that,
	\begin{align*}
	\sum_{t=1}^T m_t \leq 2\sum_{t=1}^T X_t + 4 b_T \log\frac{1}{\delta} + 8  b_T\log(4 b_T) + 1
	\end{align*}
\end{lemma}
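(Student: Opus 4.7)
The plan is to cast $D_t := m_t - X_t$ as a martingale difference sequence with a favorable one-sided structure. Since $X_t \geq 0$, each increment is bounded above by $D_t \leq m_t \leq b_t \leq b_T$, and since $0 \leq X_t \leq b_t$, the conditional second moment satisfies
\[
\EE[D_t^2 \mid \fF_{t-1}] \leq \EE[X_t^2 \mid \fF_{t-1}] \leq b_t \, \EE[X_t \mid \fF_{t-1}] = b_t\, m_t.
\]
Summing and using that $(b_t)$ is non-decreasing bounds the predictable quadratic variation by $\sum_{t=1}^T \EE[D_t^2 \mid \fF_{t-1}] \leq b_T M_T$, which is itself random but observable through $M_T$.

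With these quantities in hand, I would apply the anytime Fan-type supermartingale inequality (Theorem~\ref{thm: any time fan concentration}) to the sequence $(D_t)$. Its method-of-mixtures construction, with the mixing density parameterized by the free sequence $(l_t)$, directly yields an anytime bound of the form
\[
\sum_{t=1}^T (m_t - X_t) \leq \sqrt{2(b_T M_T + l_T)\,\log\!\left(\tfrac{1}{\delta}\, \tfrac{(b_T M_T + l_T)^{1/2}}{l_T^{1/2}}\right)},
\]
which is exactly the first claim; crucially, the mixture argument avoids the extra $\sqrt{\log T}$ factor that a naive union bound over $T$ would introduce, and all of the heavy lifting is deferred to the cited appendix theorem.

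For the second bound, I would specialize $l_T = b_T \geq 1$ in the first inequality and solve for $M_T$. Splitting the square root via $\sqrt{a+b}\leq \sqrt a + \sqrt b$ and applying AM-GM to $\sqrt{2 b_T M_T \cdot L} \leq \tfrac{M_T}{2} + b_T L$ (with $L$ denoting the logarithmic factor) reduces matters to an inequality of the shape
\[
M_T \leq 2\sum_{t=1}^T X_t + 4 b_T \log(1/\delta) + 4 b_T \log(M_T + 1) + c,
\]
with $c$ a small absolute constant. The main obstacle, as expected, is the residual self-referential $\log(M_T+1)$ on the right-hand side. I would dispatch it with a case split: either $M_T$ is already dominated by the explicit terms on the right, or the fixed-point inequality $M_T \leq C + 4 b_T \log(M_T + 1)$ kicks in and, via a standard estimate such as $\log(x+1) \leq 2\sqrt{x+1}/e$, forces $M_T = O(b_T^2)$; plugging this coarse bound back into $\log(M_T+1)$ produces a $\log b_T$-type quantity, yielding the claimed $8 b_T \log(4 b_T)$ term together with the additive constant $1$.
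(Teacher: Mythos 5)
Your overall route is the paper's: reduce to the anytime supermartingale bound of Theorem~\ref{thm: any time fan concentration} applied to $\xi_t = m_t - X_t$ with predictable envelope $U_t = m_t$, bound the variance proxy by $b_T M_T$, and for the second claim set $l_T = b_T$ and absorb terms. Two points need tightening. First, Theorem~\ref{thm: any time fan concentration} is stated in terms of $A_T = \sum_t C_t^2$, which is \emph{not} the predictable quadratic variation: when $\EE[\xi_t^2\mid\fF_{t-1}] < U_t^2$ the theorem uses $C_t^2 = \frac14\bigl(U_t + \EE[\xi_t^2\mid\fF_{t-1}]/U_t\bigr)^2$, which can exceed the conditional variance, so bounding $\sum_t\EE[D_t^2\mid\fF_{t-1}]$ alone is not enough. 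This is a one-line fix: in that branch $C_t^2 \le \frac14(m_t+m_t)^2 = m_t^2 \le b_t m_t$, so $A_T\le b_T M_T$ still holds (the paper checks exactly this, using Bhatia--Davis for the other branch).

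Second, and more substantively, the bootstrap for the second claim does not work as written. From $M_T \le C + 4 b_T \log(M_T+1)$ with $C = 2\sum_t X_t + 4b_T\log(1/\delta) + c$, it is \emph{not} true that the fixed point forces $M_T = O(b_T^2)$: $C$ contains $\sum_t X_t$, which can be arbitrarily large relative to $b_T$, so the fixed-point inequality only yields $M_T \lesssim C + O(b_T^2)$, and plugging that back into $\log(M_T+1)$ leaves a residual $b_T\log\bigl(\sum_t X_t\bigr)$-type term that is absent from the stated bound. The dichotomy you describe is also not exhaustive, and its first branch (``$M_T$ dominated by the explicit terms'') would absorb the log term into $C$ and thereby double the coefficient of $\sum_t X_t$ from $2$ to $4$. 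The correct split is on whether $b_T\log(M_T+1)$ is dominated by the $M_T$-slack reserved in the AM--GM step (absorbed into $M_T/(2\eta)$, not into $C$), or whether $M_T+1 = O(b_T^2)$ (so that $\log(M_T+1) = O(\log b_T)$). The paper packages precisely this case analysis as the deterministic inequality of Lemma~\ref{lemma: sqrt ab ln(ab) < a + b ln(b)}, $\sqrt{ab\log(ab)} \le a + b\log(b)$ for $a,b\ge 1$, applied with $a = (M_T+1)/4$ and $b = 4b_T$; that is what produces the $8b_T\log(4b_T)$ term with the advertised constants. Your argument is repairable along these lines, but as sketched the key step is invalid.
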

To prove the first inequality, we use the Bhatia-Davis inequality (Lemma \ref{lemma: bhatia-davis}, Appendix \ref{app: useful inequalities}) to upper bound the sum of conditional variances $\sum_{t=1}^T \Var[X_t|\fF_{t-1}] \leq \sum_{t=1}^T m_t(b_t-m_t) \leq b_T \sum_{t=1}^T m_t$, and for the second part of the lemma, note that the right-hand side of the first inequality depends only sublinearly on $M_T$. All details on the concentration results can be found in Appendix \ref{app: concentration inequalites for martingales}.

Applied to the sequence $I_t$, the last equation then gives the main result for the case $I_t \leq 1$, and the unrestricted case follows by using a more conservative bound $I_t \leq \gamma_t$ instead. We also remark that the $\log\frac{1}{\delta}$ factor introduced in the last step cannot be completely avoided, as one would expect for general randomized policies (see also Example \ref{exa: counter example conditional means almost surely} in Appendix \ref{app: proof of main theorem}).

\pagebreak
\section{Frequentist Information Directed Sampling}\label{section: information directed sampling}
As the regret bound given in the previous section primarily depends on $\sum_{t=1}^T \Psi_t$, we look for policies such that the regret-information ratio is as small as possible. Unfortunately, $\Psi_t$ cannot be directly controlled, as it depends on $\Delta(x_t)$ and therefore on the unknown value $\max_{x \in \xX} f(x)$. However, if a confidence band $[l_t(x), u_t(x)]$ is available, containing the true function values $f(x)$ with probability $1-\delta$, one can construct an upper bound $\Delta_t^+(x) =  \max_{x'} u_t(x') - l_t(x)$, such that $\Delta(x) \leq \Delta_t^+$ also holds with probability $1-\delta$. Such confidence bounds are known for different estimators and function classes, and we will discuss some variants in the next section. For now we rely on $\Delta_t^+$ as being given to define a \emph{surrogate of the regret-information ratio},
\begin{align*}
\Psi_t^+(\mu) : \pP(\xX) \rightarrow \RR, \quad \mu \mapsto \frac{\EE_\mu[\Delta_t^+(x) |\fF_{t-1}]^2}{\EE_\mu[I_t(x) | \fF_{t-1}]} \text{ .}
\end{align*}
Clearly, by our assumption on $\Delta_t^+$, $\Psi_t(\mu) \leq \Psi_t^+(\mu)$ holds with probability $1-\delta$ for any $\mu \in \pP(\xX)$. We define \emph{Information Directed Sampling} (IDS) to be a policy $\pi_{\text{IDS}}$, which depends on the choice of information functions $(I_t)_{t\geq1}$, such that at any time $t$,
\begin{align}
\pi_t^\IDS \in \argmin_{\mu \in \pP(\xX)} \Psi_t^+(\mu) \text{ .}
\end{align}
Using Prokhorov's Theorem \cite[]{ProkhorovConvergenceRandomProcesses1956}, we show in Lemma \ref{lemma: minimizer of psi exists} in Appendix \ref{app: properties of the regret-information ratio}, that a minimizer $\mu \in \pP(\xX)$ of $\Psi_t^+(\mu)$ always exists for compact $\xX$, assuring that IDS is well-defined. Like the Bayesian counterpart in \citep{RussoLearningOptimizeInformationDirected2014}, the regret-information ratio $\Psi_t(\mu)$ and its surrogate  $\Psi_t^+(\mu)$ are convex functions of $\mu$, which is a direct consequence of the fact that $h(x,y) = \frac{x^2}{y}$ is convex on $\RR\times \RRp$ (see Lemma \ref{lemma: psi is convex}). Moreover, we have the following result, which might also have far-reaching algorithmic implications.
\begin{lemma}[Compare Proposition 6 in \citep{RussoLearningOptimizeInformationDirected2014}]\label{lemma: minimizer supported on at most two atoms}
 There exists a minimizing distribution $\mu^* \in \argmin_{\mu \in \pP(\xX)} \Psi_t^+(\mu)$, which is supported on at most two actions.
\end{lemma}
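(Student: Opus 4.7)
The plan is to reduce the infinite-dimensional problem over $\pP(\xX)$ to a 2-dimensional convex geometry problem in $\RR^2$, adapting the finite-$\xX$ argument of Proposition 6 of \citep{RussoLearningOptimizeInformationDirected2014}. The starting point is that $\Psi_t^+(\mu)$ depends on $\mu$ only through the two linear functionals $a(\mu) := \EE_\mu[\Delta_t^+(x)|\fF_{t-1}]$ and $b(\mu) := \EE_\mu[I_t(x)|\fF_{t-1}]$, so that $\Psi_t^+(\mu) = h(a(\mu), b(\mu))$ with $h(a,b) = a^2/b$. Setting $T(x) := (\Delta_t^+(x), I_t(x))$, the map $T:\xX\to\RR^2$ is continuous and $T(\xX)$ is compact by compactness of $\xX$; the image of the affine extension $\bar T : \pP(\xX) \to \RR^2$, $\bar T(\mu) = (a(\mu), b(\mu))$, therefore equals the convex hull $K := \mathrm{conv}(T(\xX))$, itself a compact convex subset of $\RR^2$. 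Minimizing $\Psi_t^+$ over $\pP(\xX)$ thus reduces to minimizing $h$ over $K$.

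Next I would show that any minimizer $(a^*, b^*) \in K$ (existence by Lemma \ref{lemma: minimizer of psi exists}) lies on $\partial K$. Since $\Delta_t^+ \geq 0$ we have $a^* \geq 0$. If $a^* > 0$, the partial derivative $\partial h / \partial b = -(a^*)^2/(b^*)^2$ is strictly negative at the minimum, so an interior $(a^*, b^*)$ would admit a small positive increment in $b$ while remaining in $K$, strictly decreasing $h$ and contradicting optimality. If $a^* = 0$, then $(a^*, b^*)$ already sits on the boundary of the half-plane $\{a \geq 0\} \supseteq K$, hence on $\partial K$. I would then invoke standard 2D convex geometry: through any point of $\partial K$ there passes a supporting line $L$, and $K \cap L$ is a closed line segment (possibly a single point) whose endpoints are extreme points of $K$; since $K = \mathrm{conv}(T(\xX))$ with $T(\xX)$ compact, all extreme points of $K$ belong to $T(\xX)$. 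Hence there exist $x_1, x_2 \in \xX$ and $\lambda \in [0, 1]$ with $(a^*, b^*) = \lambda\, T(x_1) + (1 - \lambda)\, T(x_2)$, and the two-atom distribution $\mu^* := \lambda \delta_{x_1} + (1 - \lambda)\delta_{x_2}$ satisfies $\bar T(\mu^*) = (a^*, b^*)$, so $\Psi_t^+(\mu^*) = h(a^*, b^*) = \min_\mu \Psi_t^+(\mu)$.

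The only mildly delicate point is the degenerate case $a^* = 0$, where the gradient argument collapses and is replaced by the observation that $K$ already lies in the half-plane $\{a \geq 0\}$; otherwise the reduction is very clean, and a secondary fact used in passing --- that extreme points of the convex hull of a compact subset of $\RR^d$ belong to the subset --- is entirely standard. As an alternative route, one could instead apply Carath\'eodory's theorem in $\RR^2$ to write $(a^*, b^*)$ as a combination of at most three points of $T(\xX)$, exclude the non-degenerate triangular case by the same descent argument (which would place $(a^*, b^*)$ in the interior of $K$), and collapse any collinear triple to its two extremes; this yields the same two-atom conclusion.
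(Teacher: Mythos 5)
Your proof is correct, but it takes a genuinely different route from the paper's. You reduce the problem to planar convex geometry: since $\Psi_t^+(\mu)=h(a(\mu),b(\mu))$ with $h(a,b)=a^2/b$ and $(a,b)$ linear in $\mu$, the feasible set of moment pairs is the compact convex hull $K=\mathrm{conv}(T(\xX))\subseteq\RR^2$, the minimizer must lie on $\partial K$ (by monotonicity of $h$ in $b$ when $a^*>0$, and trivially when $a^*=0$), and any boundary point of a planar convex body lies on a face $K\cap L$ that is a segment whose endpoints are extreme points of $K$, hence elements of $T(\xX)$ by compactness. The paper instead argues variationally on the measure itself: it replaces $\Psi$ by the linearized objective $\eta(\mu)=\EE_\mu[\Delta]^2-\Psi^*\EE_\mu[g]$, perturbs a minimizer by shifting mass between two neighborhoods in its support, and deduces from the vanishing derivative that $2\EE_\mu[\Delta]\Delta(x)-\Psi^* g(x)$ is constant on $\supp(\mu)$; the affine relation between $g$ and $\Delta$ on the support then lets it collapse $\mu$ onto the two extreme values of $g$ while preserving both expectations. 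Your argument is shorter and sidesteps the measure-theoretic bookkeeping (restrictions to disjoint neighborhoods, differentiating a parametrized family of measures), and it makes the number ``two'' transparent as the dimension of a face of a planar convex body; it also scales immediately to objectives depending on $k$ linear functionals via Carath\'eodory on a face. The paper's route, in exchange, yields a first-order optimality condition on the support of \emph{every} minimizer, which is potentially useful algorithmically. The two delicate points you flag --- the degenerate case $a^*=0$ (where a single Dirac at a zero-gap point already achieves $\Psi^*=0$) and the fact that extreme points of $\mathrm{conv}(T(\xX))$ belong to $T(\xX)$ for compact $T(\xX)$ --- are both handled correctly; you may also want to note explicitly, as the existence lemma guarantees, that a finite minimum forces $b^*>0$ whenever $a^*>0$, so $h$ is differentiable where you differentiate it.
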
 
The lemma is proven for finite action sets in \citep{RussoLearningOptimizeInformationDirected2014}. We address the technical challenges of having a compact set $\xX$ in Appendix \ref{app: properties of the regret-information ratio}, and show that even for continuous action sets, a simple parameterization of the sampling distribution suffices to find a minimizer of $\Psi_t^+$. In settings with finitely many actions, this allows to minimize $\Psi_t^+(\mu)$ by iterating over all pairs of actions, or one can use convexity of $\Psi_t^+$ to employ gradient descent on the probability simplex. 

By the previous lemma, IDS can be chosen in such a way, that the sampling distribution $\pi_t^\IDS$ is supported on at most two actions at any time. That this is necessary in general can easily be seen in a simple example. However, one can still directly minimize the regret-information ratio over distributions supported on a single action. We define \emph{Deterministic Information Directed Sampling} (\DIDS) to be a deterministic policy which at time $t$ chooses an action 
\begin{align}
x_t^\DIDS \in \argmin_{x \in \xX} \Psi_t^+(x) \text{ .} 
\end{align}
As the minimization happens over a subset of $\pP(\xX)$, clearly $\min_{\mu \in \pP(\xX)} \Psi_t^+(\mu) \leq \min_{x \in \xX}\Psi_t^+(x)$. Empirically, we found that which version has lower regret strongly dependents on the information function used to define IDS. We think that \DIDS\ could oftentimes yield a computational cheaper alternative to IDS with similar or sometimes better performance. Understanding the potential benefits of the randomized version is a task for future work.

\newpage

\section{Online Least Squares Estimation with Heteroscedastic Noise}\label{section: least squares regregression with heteroscedastic noise}
So far we have defined \emph{Information Directed Sampling} in terms of a surrogate $\Psi_t^+$ of the regret-information ration, which in turn relies on a high-probability upper bound $\Delta_t^+$ on the sub-optimality gaps. To construct this upper bound in the next section, we briefly discuss variants of classical least squares estimation for observations with heteroscedastic noise \cite[see also][]{AitkenIVleastsquares1936, RasmussenGaussianprocessesmachine2006}, and generalize the concentration inequalities of \cite{Abbasi-YadkoriOnlinelearninglinearly2012} for least squares estimators in $\RR^d$ and separable Hilbert spaces to the setting with heteroscedastic noise. Throughout this section, we assume that $\xX$ is a subset of $\RR^d$ and that the noise satisfies the subgaussian condition \eqref{eq: noise assumption}, which we summarize in a diagonal \emph{noise matrix} $\Sigma = \Sigma_T = \diag(\rho_1^2, \dots, \rho_T^2)$.

\subsection{Linear Least Squares with Heteroscedastic Noise}
Assume that the data $\{x_t,y_t\}_{t=1}^T$ is generated from a linear function $f(x) = x^\T \theta^*$ with unknown parameter $\theta^* \in \RR^d$. As common, we define the design matrix $X = X_T = (x_1,\dots, x_T)^\T$ and the observation vector $y = y_T = (y_1,\dots,y_T)^\T$, where we drop the subscript $T$ in favor of readability.

With heteroscedastic noise, the ordinary least squares estimator is known to be inefficient in general. \cite{AitkenIVleastsquares1936} was first to consider an alternative known as \emph{weighted linear least squares}, which is defined to minimize the weighted norm $\| X\theta - y \|_{\Sigma^{-1}}^2$. This amounts to applying ordinary least squares to normalized data $X' = \Sigma^{-1/2}X$ and $y' = \Sigma^{-1/2}y$, and has the wanted effect of putting less weight on noisier observation. Moreover, the Gauss-Markov theorem asserts that the estimator has the smallest possible variance among all linear and unbiased estimators, given a fixed design $X$ and Gaussian noise. Adding a Tikhonov regularization term depending a positive definite matrix $V_0$, we set $\hat{\theta}_T = \argmin_{\theta} \|y - \theta^\T X\|_{\Sigma^{-1}}^2 + \|\theta\|_{V_0}^2$, and with $V_T = X^\T\Sigma^{-1}X + V_0$, a closed form solution is given by $\hat{\theta}_T =  V_T^{-1}X^\T\Sigma^{-1} y$. The estimator $\hat{\theta}_T$ can also be motivated in the Bayesian setting with prior $p(\theta) \sim \nN(0, V_0^{-1})$ and likelihood $p(y|X,\theta) \sim \nN(X\theta, \Sigma)$, where $\hat{\theta}$ appears as mean of the posterior distribution $p(\theta|X,y) = \nN(\hat{\theta}_T, V_T^{-1})$.

In the next theorem, we extend the elegant concentration inequality for $\hat{\theta}$ from \cite{Abbasi-YadkoriImprovedAlgorithmsLinear2011} to the setting with heteroscedastic noise.
\begin{lemma}[Theorem 2 in \citep{Abbasi-YadkoriImprovedAlgorithmsLinear2011} for heteroscedastic noise]\label{lemma: concentration inequaliy for linear least squares}
	Let $(x_t, y_t)_{t\geq 1}$ be any stochastic process in $\RR^d \times \RR$ such that $y_t = x_t^\T \theta^* + \epsilon_t$ and $\epsilon_t$ satisfies noise assumption \eqref{eq: noise assumption}. Then, for $\hat{\theta}_T$ as defined above, with probability at least $1-\delta$, for any $T \geq 1$, it holds that
	\begin{align*}
	\|\hat{\theta}_T - \theta^* \|_{V_T} \leq \sqrt{2 \log\left(\frac{1}{\delta}\frac{\det(V_T)^{1/2}}{\det(V_0)^{1/2}}\right)} + \|\theta^*\|_{V_0} \text{ .}
	\end{align*}
\end{lemma}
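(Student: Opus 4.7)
The plan is to follow the self-normalized martingale strategy of \cite{Abbasi-YadkoriImprovedAlgorithmsLinear2011}, keeping careful track of the heteroscedastic weights $1/\rho_t^2$ throughout.

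First I would decompose the estimation error. Using $y = X\theta^* + \epsilon$, $V_T = X^\T \Sigma^{-1} X + V_0$, and the closed form $\hat{\theta}_T = V_T^{-1} X^\T \Sigma^{-1} y$, a short algebraic manipulation yields $\hat{\theta}_T - \theta^* = V_T^{-1} X^\T \Sigma^{-1} \epsilon - V_T^{-1} V_0 \theta^*$. Taking the $V_T$-weighted norm, applying the triangle inequality, and using $V_0 \preceq V_T$ to obtain $\|V_T^{-1/2} V_0 \theta^*\|_2 \leq \|\theta^*\|_{V_0}$, the proof reduces to an anytime bound $\|S_T\|_{V_T^{-1}} \leq \sqrt{2 \log(\delta^{-1} \det(V_T)^{1/2} \det(V_0)^{-1/2})}$, where $S_T = X^\T \Sigma^{-1} \epsilon = \sum_{t=1}^T x_t \epsilon_t / \rho_t^2$.

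Next I would set up the exponential supermartingale adapted to the weighted observations. For $\lambda \in \RR^d$, define $M_t^\lambda = \exp\bigl(\lambda^\T S_t - \tfrac{1}{2}\|\lambda\|_{\tilde{V}_t}^2\bigr)$ with $\tilde{V}_t = \sum_{s=1}^t x_s x_s^\T / \rho_s^2 = V_t - V_0$. The key identity is that the subgaussian assumption \eqref{eq: noise assumption}, applied to the increment $\lambda^\T x_t \epsilon_t / \rho_t^2$, gives conditional MGF at most $\exp\bigl(\tfrac{1}{2} (\lambda^\T x_t)^2 / \rho_t^2\bigr)$: the two factors of $1/\rho_t^2$ from the increment combine with the single factor of $\rho_t^2$ from the subgaussian constant, leaving exactly the weight $1/\rho_t^2$ that appears in $\tilde{V}_t$. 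This establishes $M_t^\lambda$ as a nonnegative supermartingale with $M_0^\lambda = 1$.

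The final step is the method of mixtures. I would integrate $M_t^\lambda$ against the Gaussian density $h(\lambda) = \nN(\lambda; 0, V_0^{-1})$, which by Fubini--Tonelli preserves the supermartingale property. Completing the square in $\lambda$ and using $V_0 + \tilde{V}_t = V_t$ gives the self-normalized form $\bar{M}_t = \int M_t^\lambda h(\lambda)\, d\lambda = \det(V_0)^{1/2} \det(V_t)^{-1/2} \exp\bigl(\tfrac{1}{2} \|S_t\|_{V_t^{-1}}^2\bigr)$. A Ville-type maximal inequality for nonnegative supermartingales (as used in \cite{Abbasi-YadkoriImprovedAlgorithmsLinear2011}) then gives an anytime tail bound on $\sup_t \bar{M}_t$, which rearranges to the required bound on $\|S_T\|_{V_T^{-1}}$.

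The main obstacle I expect is precisely the bookkeeping in the third paragraph: verifying that the weights $1/\rho_t^2$ land in the right places so that the same matrix $V_t$ simultaneously plays the role of the cumulative information matrix in the weighted estimator and the role of the Gram-type matrix inside the martingale exponent. Once this single identity is checked, the rest is a direct transcription of the homoscedastic argument with $\sigma^2 I$ replaced by $\Sigma_T$ throughout.
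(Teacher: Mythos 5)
Your proof is correct, but it takes a different (and considerably longer) route than the paper. The paper's proof is a two-line reduction: define the rescaled process $x_t' = x_t/\rho_t$, $y_t' = y_t/\rho_t$, observe that $\epsilon_t' = \epsilon_t/\rho_t$ is conditionally $1$-subgaussian by assumption \eqref{eq: noise assumption}, apply the homoscedastic Theorem 2 of \cite{Abbasi-YadkoriImprovedAlgorithmsLinear2011} with $R=1$ to $(x_t',y_t')$, and note that the weighted estimator and the matrix $V_T$ are invariant under this rescaling ($\hat{\theta}_T = \hat{\theta}_T'$, $V_T = V_T'$), so the bound transfers verbatim. You instead re-derive the entire self-normalized argument from scratch -- the error decomposition $\hat{\theta}_T - \theta^* = V_T^{-1}X^\T\Sigma^{-1}\epsilon - V_T^{-1}V_0\theta^*$, the supermartingale $M_t^\lambda$ with Gram matrix $\tilde{V}_t = V_t - V_0$, the Gaussian mixture against $\nN(0,V_0^{-1})$, and Ville's inequality -- and every step checks out: in particular the bookkeeping you flag as the main obstacle does work, since the increment $(\lambda^\T x_t)\epsilon_t/\rho_t^2$ has conditional MGF at most $\exp\bigl(\tfrac{1}{2}(\lambda^\T x_t)^2/\rho_t^2\bigr)$, matching the increment of $\|\lambda\|_{\tilde{V}_t}^2$. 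What your version buys is a self-contained proof that makes explicit exactly where the weights $1/\rho_t^2$ enter, which is instructive; what the paper's version buys is brevity and the observation that heteroscedasticity here is nothing more than a per-observation change of units, so no part of the martingale machinery needs to be re-verified. The same rescaling trick is reused for the Hilbert-space version (Lemma \ref{lemma: concentration inequality separaple hilbert spaces}), where redoing the full argument would be more painful.
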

\begin{proof}
	Theorem 2 in \citep{Abbasi-YadkoriImprovedAlgorithmsLinear2011}  proves the case $\rho_t=R$. We show that this immediately translates to the general case. For any $t \geq 1$, define $x_t' = \frac{x_t}{\rho_t}$, and corresponding observations $y_t' = \frac{y_t}{\rho_t}$. Now we apply the case $R=1$ to the process $\{x_t', y_t'\}$ and the corresponding estimator $\hat{\theta}_T' = V_T'^{-1}X'y'$. Observe that $\hat{\theta}_T = \hat{\theta}_T'$ and $V_T = V_T'$ to complete the proof.
\end{proof}
Note that by Cauchy-Schwarz, $|x^\T \hat{\theta}_T - x^\T \theta^*| \leq \|\hat{\theta}_T - \theta^*\|_{V_T}\|x\|_{V_T^{-1}}$ for all $x\in \xX$, and in combination with the previous lemma, this yields confidence intervals $\hat{f}_T(x) \pm\beta_T\sigma_T(x)$ that contain the true function value $f(x)$ with probability at least $1-\delta$ at any time $T$, where $\hat{f}_T(x) = x^\T\hat{\theta}_T$, $\beta_T = \sqrt{2 \log\left(\frac{1}{\delta}\frac{\det(V_T)^{1/2}}{\det(V_0)^{1/2}}\right)} + \|\theta^*\|_{V_0} $ and $\sigma_t(x) = \|x\|_{V_T^{-1}}$.
\subsection{Least Squares Estimation in Separable Hilbert Spaces}
Let $\hH$ be a separable Hilbert space with inner product $\<\cdot,\cdot\>_\hH$ and corresponding norm $\|\cdot\|_\hH$. For a positive definite operator $V_0 : \hH \rightarrow \hH$, we denote the inner product $\<\cdot, \cdot\>_{V_0} :=  \<\cdot, V_0\cdot\>$ with corresponding norm $\|\cdot \|_{V_0}$. Let $f^* \in \hH$ be the unknown function used to generate the data $\{v_t,y_t\}_{t=1}^T$, such that $y_t = \<v_t, f^*\>_\hH + \epsilon_t$. As substitute for the design matrix, define the operator $M = M_T : \hH \rightarrow \RR^T$, such that for all $ v\in \hH$ and $t = 1, \dots, T$, $(Mv)_t = \<v_t, v\>$, and denote its adjoint by $M^* : \RR^T \rightarrow \hH$. Taking the noise matrix $\Sigma = \Sigma_T = \diag(\rho_1^2, \dots, \rho_T^2)$ into account, we define the regularized least squares estimator $\hat{\mu}_T = \argmin_{f \in \hH}  \|M f - y\|_{\Sigma^{-1}}^2 + \|f\|_{V_0}^2$. Again, a closed form is given by $\hat{\mu}_T = V_T^{-1}M^*\Sigma^{-1}y$, where $V_T = M^*\Sigma^{-1} M + V_0$.

We now turn to concentration inequalities for $\hat{\mu}_T$ in separable Hilbert spaces as first introduced by \cite{Abbasi-YadkoriOnlinelearninglinearly2012}. The same inequality for the special case of a RKHS was recently re-derived by \cite{ChowdhuryKernelizedMultiarmedBandits2017}.
\begin{lemma}[Theorem 3.11 in \citep{Abbasi-YadkoriOnlinelearninglinearly2012} for heteroscedastic noise]\label{lemma: concentration inequality separaple hilbert spaces}
	Let $(v_t,y_t)_{t\geq 1}$ \linebreak be a stochastic process in $\hH \times \RR$, such that $y_t = \<v_t, f^*\>_\hH + \epsilon_t$ satisfies noise assumption \eqref{eq: noise assumption}. Then with probability at least $1-\delta$, for any $v \in \hH$ and for all $T \geq 1$,
	\begin{align*}
	|\<v,\hat{\mu}_T\> - \<v, f^*\>| \leq \left( \sqrt{ 2\log\left(\frac{1}{\delta}\frac{\det(\Sigma_T + M_TV_0^{-1}M_T^*)^{1/2}}{ \det(\Sigma_T)^{-1/2}}\right)} + \|f^*\|_{V_0}\right)  \|v\|_{V_T^{-1}} \text{ .}
	\end{align*}
\end{lemma}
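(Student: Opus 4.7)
The plan is to mimic the proof of the finite-dimensional linear case above: rescale the data so that the heteroscedastic problem becomes a $1$-subgaussian homoscedastic one, and then invoke Theorem 3.11 of \cite{Abbasi-YadkoriOnlinelearninglinearly2012} directly. Concretely, for each $t \geq 1$ set $v_t' = v_t/\rho_t$, $y_t' = y_t/\rho_t$, and $\epsilon_t' = \epsilon_t/\rho_t$. By noise assumption \eqref{eq: noise assumption} and the scaling of the subgaussian parameter, $\epsilon_t'$ is conditionally $1$-subgaussian, and $y_t' = \langle v_t', f^*\rangle_\hH + \epsilon_t'$ still holds by bilinearity of the inner product.

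Next I would show that the regularized estimator and regularized covariance of the primed process coincide with the heteroscedastic quantities we care about. Writing $M_T'$ for the design operator associated to $(v_t')_{t=1}^T$, we have $M_T' = \Sigma_T^{-1/2} M_T$ and its Hilbert-adjoint $(M_T')^* = M_T^* \Sigma_T^{-1/2}$. Hence $V_T' := (M_T')^* M_T' + V_0 = M_T^* \Sigma_T^{-1} M_T + V_0 = V_T$, and similarly $\hat{\mu}_T' = (V_T')^{-1} (M_T')^* y_T' = V_T^{-1} M_T^* \Sigma_T^{-1} y_T = \hat{\mu}_T$. Applying Theorem 3.11 of \cite{Abbasi-YadkoriOnlinelearninglinearly2012} to the primed process (with $R=1$) therefore yields, with probability at least $1-\delta$, simultaneously over all $T \geq 1$ and $v \in \hH$,
\begin{align*}
|\langle v, \hat{\mu}_T \rangle - \langle v, f^* \rangle| \leq \left(\sqrt{2 \log\!\left(\tfrac{1}{\delta}\det\bigl(I + M_T' V_0^{-1} (M_T')^*\bigr)^{1/2}\right)} + \|f^*\|_{V_0}\right) \|v\|_{V_T^{-1}} .
\end{align*}

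It only remains to rewrite the $T\times T$ determinant in the form appearing in the statement. Using $M_T' V_0^{-1} (M_T')^* = \Sigma_T^{-1/2}\, M_T V_0^{-1} M_T^*\, \Sigma_T^{-1/2}$, I factor
\begin{align*}
I + M_T' V_0^{-1} (M_T')^* = \Sigma_T^{-1/2}\bigl(\Sigma_T + M_T V_0^{-1} M_T^*\bigr)\Sigma_T^{-1/2},
\end{align*}
whence $\det\bigl(I + M_T' V_0^{-1} (M_T')^*\bigr) = \det\bigl(\Sigma_T + M_T V_0^{-1} M_T^*\bigr)/\det(\Sigma_T)$, which matches the ratio in the statement. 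I do not anticipate any real obstacle: the rescaling trick and the determinant manipulation are both essentially bookkeeping. The only point deserving a little care is passing through the Hilbert-adjoint $M_T^*$ (rather than a matrix transpose) when computing $(M_T')^* M_T'$; but since $\Sigma_T$ acts on the finite-dimensional $\RR^T$, its square root commutes with the design-operator compositions on that side and the identities go through exactly as in finite dimensions.
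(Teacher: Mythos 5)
Your proof is correct and follows exactly the paper's argument: rescale by $\rho_t$ to reduce to the $1$-subgaussian case, observe that $\hat{\mu}_T'=\hat{\mu}_T$ and $V_T'=V_T$, and invoke Theorem 3.11 of \cite{Abbasi-YadkoriOnlinelearninglinearly2012}; the determinant factorization you spell out is left implicit in the paper. (Your computation gives $\det(\Sigma_T+M_TV_0^{-1}M_T^*)/\det(\Sigma_T)$, which indicates the exponent $-1/2$ on $\det(\Sigma_T)$ in the stated bound is a sign typo.)
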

\begin{proof}
	Theorem 3.11 in \cite{Abbasi-YadkoriOnlinelearninglinearly2012} proves the case $\rho_t=R$. Define $y_t' = y_t / \rho_t$ and $v_t' = v_t/\rho_t$, and apply the case $R=1$ to $\hat{\mu}'_T$. Note that $\hat{\mu}_T = \hat{\mu}'_T$ to complete the proof.
\end{proof}

\paragraph{Least Squares Estimation in RKHS}
Consider the important special case, where $\hH$ is a RKHS over $\RR^d$ with kernel function $k:\RR^d \times \RR^d \rightarrow \RR$, and canonical embeddings $k_x = k(x, \cdot)$ of $x \in \RR^d$. Let $v_t =k_{x_t} \in \hH$ be the embedding of $x_t \in \RR^d$, such that $\<v_t, f_*\>_\hH = \<k_{x_t}, f_*\>_\hH = f^*(x_t)$. Notably, for $V_0 = \lambda I$, the representer theorem yields the following, tractable form of $\hat{\mu}_T$,
\begin{align*}
\hat{\mu}_T(x) &= \<\hat{\mu}_T, k_x \>_\hH = k_T(x)^\T (K_T + \lambda \Sigma_T)^{-1}y \text{ ,}
\end{align*}
where we defined $(K_T)_{i,j} =  k(x_i,x_j)$ and $k_T(x)_t =  k(x_t,x)$. Details of this derivation are given in Appendix \ref{app: least squares estimation with heteroscedastic noise}. Similarly, we get  $\|v\|_{V_T^{-1}}^2 = \frac{1}{\lambda} \left(k(x,x) - k_T(x)^\T(K_T + \lambda \Sigma_T)^{-1}k_T(x)\right) =: \sigma_T(x)^2$.  Also note that $\hat{\mu}_T$ and $\sigma_T$ can be iteratively updated using Schur's complement \cite[Appendix F]{ChowdhuryKernelizedMultiarmedBandits2017}, avoiding the computation of a $T\times T$ matrix inverse at every step. In the next lemma, we specialize the previous result to the RKHS setting. 
\begin{lemma}\label{lemma: concentration inequalities for RKHS regression}
	Let $\hH$ be a RKHS, and $\hat{\mu}_T$ be the estimator defined above with $V_0 = \lambda I$ for $\lambda > 0$. Assume that the process $(x_t,y_t)_{t\geq 1}$ satisfies the noise assmption \eqref{eq: noise assumption}. Then, the following holds with probability at least $1-\delta$, for all $T \geq 1$, 
	\begin{align*}
	|\hat{\mu}_T(x) - f^*(x)| \leq  \left(\sqrt{2\log\left(\frac{1}{\delta}\frac{\det(\lambda \Sigma_T + K_T)^{1/2}}{\det(\lambda \Sigma_T)^{1/2}}\right)} + \sqrt{\lambda} \| f^*\|_{\hH}\right) \sigma_T(x) \text{ .}
	\end{align*}
\end{lemma}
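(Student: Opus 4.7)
The plan is to specialize the general Hilbert space concentration inequality (Lemma \ref{lemma: concentration inequality separaple hilbert spaces}) to the RKHS setting by choosing the test direction $v = k_x$, and then to rewrite the abstract operator determinant in terms of the kernel Gram matrix $K_T$.

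First, I would instantiate Lemma \ref{lemma: concentration inequality separaple hilbert spaces} with the stochastic process $v_t = k_{x_t}$ and test vector $v = k_x$. The reproducing property yields $\<k_x, \hat{\mu}_T\>_\hH = \hat{\mu}_T(x)$ and $\<k_x, f^*\>_\hH = f^*(x)$, so the left-hand side of the general bound becomes exactly $|\hat{\mu}_T(x) - f^*(x)|$. With the choice $V_0 = \lambda I$, the additive term evaluates to $\|f^*\|_{V_0} = \sqrt{\lambda}\,\|f^*\|_\hH$, and the multiplicative factor $\|k_x\|_{V_T^{-1}}$ is, by the explicit expression derived just before the lemma statement, exactly $\sigma_T(x)$.

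It remains to simplify the determinant ratio appearing in the general Hilbert space bound. Because $V_0^{-1} = \lambda^{-1} I$, I can write $M_T V_0^{-1} M_T^* = \lambda^{-1} M_T M_T^* = \lambda^{-1} K_T$, using that $(M_T M_T^*)_{ij} = \<v_i, v_j\>_\hH = k(x_i, x_j)$ by definition of $M_T$ and the reproducing kernel. Hence $\Sigma_T + M_T V_0^{-1} M_T^* = \lambda^{-1}(\lambda \Sigma_T + K_T)$ as $T \times T$ matrices, and pulling the scalar out gives $\det(\Sigma_T + M_T V_0^{-1} M_T^*) = \lambda^{-T}\det(\lambda \Sigma_T + K_T)$. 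Combining this with $\det(\lambda \Sigma_T) = \lambda^T \det(\Sigma_T)$ collapses the determinant ratio into $\det(\lambda \Sigma_T + K_T)^{1/2} / \det(\lambda \Sigma_T)^{1/2}$, matching the target bound.

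I do not foresee any substantive obstacle: the reproducing property, the identity $M_T M_T^* = K_T$, and the general separable Hilbert space concentration inequality are all already in hand, so the entire argument is one application followed by an algebraic rewrite. The only delicate point is keeping track of the $\lambda^T$ factors that shift between numerator and denominator, but this is a one-line computation using multiplicativity of $\det$ under scalar multiplication of a $T \times T$ matrix.
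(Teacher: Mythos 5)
Your proposal is correct and is exactly the argument the paper intends: the lemma is stated as a direct specialization of Lemma \ref{lemma: concentration inequality separaple hilbert spaces} with $v = k_x$, using the reproducing property, the identity $M_TM_T^* = K_T$ and the expression $\|k_x\|_{V_T^{-1}}^2 = \sigma_T(x)^2$ derived in Appendix \ref{app: least squares estimation with heteroscedastic noise}, together with the scalar determinant rewrite $\det(\Sigma_T + \lambda^{-1}K_T)/\det(\Sigma_T) = \det(\lambda\Sigma_T + K_T)/\det(\lambda\Sigma_T)$. No gaps.
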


We again draw the connection to Bayesian optimization. Let $f^* \sim \GP(0, \lambda^{-1} k)$ be a sample from a Gaussian process, where $\lambda > 0$ is a parameter trading off the magnitude of the prior variance, and assume that the observation noise $\epsilon_t$  independent and $\nN(0, \rho_t^2)$ distributed. Then the posterior distribution of $f^*$ is also a Gaussian process with mean $\hat{\mu}_T$ and covariance kernel\linebreak $k_T(x,y) = \frac{1}{\lambda}\left(k(x,y) - k_T(x)(K_T + \lambda \Sigma_T)^{-1}k_T(y)\right)$, and $\sigma_T(x)^2$ is the posterior variance at $x$.

\section{Regret Bounds for Information Directed Sampling on Heteroscedastic Bandits}\label{section: regret guarantees for ids}
We now define $\Delta_t^+$ for linear and RKHS functions $f$, using the regularized least squares estimator and the corresponding confidence intervals of Lemma \ref{lemma: concentration inequaliy for linear least squares} and \ref{lemma: concentration inequalities for RKHS regression}. In this case, we get estimators $\hat{f}_t : \xX \rightarrow \RR$ and a symmetric confidence interval $|f(x) - \hat{f}_t(x)| \leq \beta_t^\delta \sigma_t(x)$ at level $1-\delta$ for non-decreasing scale factor $\beta_t = \beta_t^\delta$ and confidence width $\sigma_t(x)$. We define the surrogate
\begin{align*}
\Delta_t^+(x) = \max_{x' \in \xX} \big(\hat{f}_t(x') + \beta_t \sigma_t(x')\big) - \big(\hat{f}_t(x) - \beta_t \sigma_t(x) \big) \text{ .}
\end{align*}
Note that $x_t^\UCB \in \argmax_{x \in \xX} \big(\hat{f}_t(x) + \beta_t \sigma_t(x)\big)$ is the UCB action and $\Delta_t^+(x_t^\UCB) = 2 \beta_t \sigma_t(x_t^\UCB)$. For simplicity, we now make the assumption that the noise function $\rho$ is known, and introduce two information gain functions $I_t^\F(x)$ and $I_t^\UCB(x)$, which will naturally lead to regret bounds for IDS.
\paragraph{IDS-F}
The first information function we propose is $I_t^\F = \log\big(1 + \frac{\sigma_t(x)^2}{\rho(x)^2}\big)$. We motivate \vspace{-2pt} this in the Bayesian setting with Gaussian prior and likelihood, where $I_t^\F$ is up to a constant factor equal to the \emph{conditional mutual information} $\II(f; x|\fF_{t-1})$. In the linear case, this follows by writing $\II(\theta; x_t,y_t|\fF_{t-1}) = \HH(\theta|\fF_{t-1}) - \HH(\theta|\fF_{t-1}, x_t,y_t)$, and then using the matrix determinant lemma,
\begin{align*}
2\,\II(\theta; x_t,y_t|\fF_{t-1}) = \log\big((2\pi e)^d \det(V_t^{-1})\big) - \log\big((2\pi e)^d \det(V_{t+1}^{-1})\big) =  \log\Big(1 + \tfrac{\sigma_t(x_t)^2}{\rho(x_t)^2}\Big) \text{ .}
\end{align*}
Note that the information gain $I_t(x)$ is diminishing as the uncertainty $\sigma_t(x)$ in $f$ decreases, allowing to bound $\gamma_T$. Importantly, $I_t(x)$ also characterizes the information gain depending on the noise, such that $I_t(x)$ decreases for larger $\rho(x)$. By compactness of $\xX$, we can also ensure that $I_t(x) \leq 1$. 

\paragraph{IDS-UCB}
Let $I_t^\UCB(x) = \log\left(\frac{\sigma_t(x_t^\UCB)^2}{\sigma_t(x_t^\UCB|x)^2}\right)$, where we define $\sigma_t(x_t^\UCB|x)$ as the  \vspace{-2pt} confidence width at $x_t^\UCB$ after $x$ has been evaluated. For the least squares estimators that we use in Section \ref{section: least squares regregression with heteroscedastic noise}, this can be computed without knowing the outcome of evaluating at $x$ and we explicitly calculate this quantity in Appendix \ref{app: information functions} . Again, $I_t^\UCB$ can be motivated in the Bayesian setting, where it corresponds to the mutual information $\II(x; f(x_t^\UCB)|\fF_{t-1})$. From the data processing inequality \cite[]{CoverElementsinformationtheory2012}, it further follows that $I_t^\UCB(x) \leq I_t^\F(x)$ for all $x \in \xX$. The resulting IDS-UCB policy collects information about the UCB action, but is not restricted to playing the same, if a more informative alternative action is available. Note also that both \DIDS-F and \DIDS-UCB are not more expensive to compute than standard UCB. We discuss further variants of IDS in Appendix \ref{app: information functions}.


To bound the regret of IDS-F and IDS-UCB, we will need the following lemma, which relates the regret-information ratio of IDS to the regret-information ratio of UCB.

\begin{lemma}\label{lemma: bounding the regret information surrogate}
	 At any time $t$, denote the sampling distribution of IDS with $\mu_t^\IDS$, the action chosen by \DIDS\ and UCB with $x_t^\DIDS$ and $x_t^\UCB$ respectively. Then, $\Psi_t^+(\mu_\IDS) \leq \Psi_t^+(x_\DIDS) \leq  \Psi_t^+(x_\UCB)$ holds. Moreover, if $I_t^\F$ or $I_t^\UCB$ is used,  $\Psi_t^+(x_\UCB) = \frac{8 \beta_t^2 \sigma_t(x_\UCB)^2}{ \log\left(1 + \sigma_t(x_\UCB)^2/\rho(x_\UCB)^2 \right)}$.
	 
	 If in this case also $I_t(x) \leq 1$ holds, then $\Psi_t^+(x_\UCB) \leq 8 \beta_t^2 \rho(x_\UCB)^2$, and consequently with $R = \max_{x \in \xX} \rho(x)$, for any of these strategies, $\sum_{t=1}^T \Psi_t  \leq  8 \beta_T^2 R^2 T$. 
\end{lemma}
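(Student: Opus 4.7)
The chain of inequalities $\Psi_t^+(\mu_\IDS)\le\Psi_t^+(x_\DIDS)\le\Psi_t^+(x_\UCB)$ is essentially immediate from the definitions: the IDS distribution minimizes $\Psi_t^+$ over the whole simplex $\pP(\xX)$, so in particular it beats any Dirac measure, giving the first inequality; the DIDS action minimizes $\Psi_t^+$ over Dirac measures, so it beats the particular choice $\delta_{x_t^\UCB}$, giving the second. I would state these two lines first and then move on to the explicit evaluation at $x_t^\UCB$.

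For the explicit formula, I would use the defining property of the UCB action, namely $x_t^\UCB\in\argmax_{x\in\xX}(\hat f_t(x)+\beta_t\sigma_t(x))$, to compute
\begin{align*}
\Delta_t^+(x_t^\UCB) \;=\; (\hat f_t(x_t^\UCB)+\beta_t\sigma_t(x_t^\UCB)) - (\hat f_t(x_t^\UCB)-\beta_t\sigma_t(x_t^\UCB)) \;=\; 2\beta_t\sigma_t(x_t^\UCB),
\end{align*}
so the numerator of $\Psi_t^+(x_t^\UCB)$ is proportional to $\beta_t^2\sigma_t(x_t^\UCB)^2$. For the denominator, $I_t^\F(x_t^\UCB)=\log(1+\sigma_t(x_t^\UCB)^2/\rho(x_t^\UCB)^2)$ by definition. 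To handle $I_t^\UCB$, I would invoke the Schur-complement update for the least-squares posterior variance carried out in Appendix~\ref{app: information functions}, which gives $\sigma_t(x\,|\,x)^2=\sigma_t(x)^2\rho(x)^2/(\sigma_t(x)^2+\rho(x)^2)$, so that
\begin{align*}
I_t^\UCB(x_t^\UCB) \;=\; \log\!\frac{\sigma_t(x_t^\UCB)^2}{\sigma_t(x_t^\UCB\,|\,x_t^\UCB)^2} \;=\; \log\!\Bigl(1+\tfrac{\sigma_t(x_t^\UCB)^2}{\rho(x_t^\UCB)^2}\Bigr) \;=\; I_t^\F(x_t^\UCB).
\end{align*}
Plugging numerator over denominator yields the stated closed form for $\Psi_t^+(x_t^\UCB)$.

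For the bound under $I_t(x)\le 1$, the main analytic ingredient is the elementary inequality
\begin{align*}
\log(1+u)\;\ge\;\tfrac{u}{2}\qquad \text{for all }u\in[0,e-1],
\end{align*}
which I would verify by a one-line concavity argument (the function $u\mapsto\log(1+u)-u/2$ is concave, vanishes at $0$, and is nonnegative at $u=e-1$). Applied to $u=\sigma_t(x_t^\UCB)^2/\rho(x_t^\UCB)^2$: the hypothesis $I_t(x_t^\UCB)\le 1$ forces $u\le e-1$, and then $\sigma_t(x_t^\UCB)^2/\log(1+u)\le 2\rho(x_t^\UCB)^2$, yielding $\Psi_t^+(x_t^\UCB)\le 8\beta_t^2\rho(x_t^\UCB)^2$ after combining with the closed form above.

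Finally, to obtain the sum bound, I would chain $\Psi_t\le\Psi_t^+\le 8\beta_t^2\rho(x_t^\UCB)^2\le 8\beta_T^2 R^2$ using that the confidence scale $\beta_t$ is non-decreasing in $t$ and $\rho(x)\le R$ uniformly; summing over $T$ terms gives the claim. The only step that requires real care is the $I_t^\UCB$ computation, because it uses the posterior-update identity for the weighted-least-squares variance and relies on the two information gain functions coinciding at the UCB action itself; everything else is bookkeeping from the definitions of IDS, DIDS, and the surrogate $\Delta_t^+$.
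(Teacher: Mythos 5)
Your proof is correct and follows essentially the same route as the paper's: the inequality chain from the nested minimizations, the identity $I_t^\F(x_t^\UCB)=I_t^\UCB(x_t^\UCB)=\log\bigl(1+\sigma_t(x_t^\UCB)^2/\rho(x_t^\UCB)^2\bigr)$ via the variance-update formula, the elementary bound $2\log(1+u)\ge u$ on $[0,e-1]$, and the monotonicity of $\beta_t$ together with $\rho\le R$ for the final sum. One small remark: your (correct) computation $\Delta_t^+(x_t^\UCB)^2=4\beta_t^2\sigma_t(x_t^\UCB)^2$ yields a closed form with a factor $4$ rather than the $8$ printed in the lemma statement --- the latter appears to be a typo in the paper, since only the factor $4$ is consistent with the final bound $8\beta_t^2\rho(x_t^\UCB)^2$ that both you and the paper then derive.
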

\begin{proof}
	The sequence of inequalities follows directly from the respective definitions. The formula for $\Psi_t^+(x_\UCB)$ follows by noting that $I_t^F(x_t^\UCB) = I_t^\UCB(x_t^\UCB) = \log\left(1 + \frac{\sigma_t(x_\UCB)^2}{\rho(x_\UCB)^2} \right)$. \vspace{-4pt} If $I_t(x) \leq 1$ for all $x \in \xX$, we can use $2 \log\left(1 + \frac{\sigma_t(x)^2}{\rho(x)^2}\right) \geq \frac{\sigma_t(x)^2}{ \rho(x)^2}$, which gives$\Psi_t^+(x_\UCB) \leq 8 \beta_t^2 \rho(x_\UCB)^2$. The last inequality uses that $\beta_t$ is non-decreasing, and uniformly bounds $\rho(x) \leq R$.
\end{proof}

Finally, we obtain regret bounds for IDS and \DIDS\ using the previous lemma, Theorem \ref{thm: general regret bound} or \ref{thm: general regret bound deterministic}, and the concentration inequalities in Lemma \ref{lemma: concentration inequaliy for linear least squares} and \ref{lemma: concentration inequalities for RKHS regression}.
\begin{corollary} Denote $\xX_t^\UCB$ the set of UCB actions at time $t$ as observed during the run of IDS and DIDS respectively, and set $R = \sup_{t\geq 1} \min_{x \in \xX_t^\UCB} \rho(x)$. For estimators $\hat{f}_t$ with confidence bounds $\hat{f}_t\, \pm\, \beta_t^\delta\sigma_t(x)$, with probability at least $1-\delta$, the regret of IDS-F and IDS-UCB is bounded by $R_T = \oO\left( R \beta_T^\delta \sqrt{T (\gamma_T +\log \frac{1}{\delta})} \right)$, and of \DIDS-F and \DIDS-UCB by $R_T = \oO\Big(R \beta_T^\delta \sqrt{T \gamma_T}\Big)$. 
\end{corollary}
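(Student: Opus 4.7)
The plan is to combine the concentration inequalities of Section 5 with the regret framework of Section 3, mediated by Lemma \ref{lemma: bounding the regret information surrogate}. First, I would invoke Lemma \ref{lemma: concentration inequaliy for linear least squares} (in the linear case) or Lemma \ref{lemma: concentration inequalities for RKHS regression} (in the RKHS case) at confidence level $\delta/2$, which gives a single event $\mathcal{E}_1$ of probability at least $1 - \delta/2$ on which $|f(x) - \hat{f}_t(x)| \leq \beta_t^\delta \sigma_t(x)$ holds simultaneously for all $x \in \xX$ and all $t \geq 1$. On $\mathcal{E}_1$ the surrogate is a valid upper bound, $\Delta(x) \leq \Delta_t^+(x)$, and consequently $\Psi_t(\mu) \leq \Psi_t^+(\mu)$ for every $\mu \in \pP(\xX)$ and every $t$.

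For the deterministic variants I would plug this into Theorem \ref{thm: general regret bound deterministic}, which states $R_T \leq \sqrt{\sum_{t=1}^T \Psi_t \, \gamma_T}$. By construction $x_t^\DIDS$ minimizes $\Psi_t^+$ over $\xX$, so Lemma \ref{lemma: bounding the regret information surrogate} yields $\Psi_t \leq \Psi_t^+(x_t^\DIDS) \leq \Psi_t^+(x_t^\UCB)$; choosing $x_t^\UCB$ to be the UCB action with smallest noise (justifying the $\min$ in the definition of $R$) and applying the second assertion of Lemma \ref{lemma: bounding the regret information surrogate} gives $\Psi_t \leq 8 \beta_t^2 \rho(x_t^\UCB)^2 \leq 8 \beta_T^2 R^2$, whence $\sum_{t=1}^T \Psi_t \leq 8 \beta_T^2 R^2 T$. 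Substituting back recovers the $\oO(R \beta_T^\delta \sqrt{T \gamma_T})$ bound.

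For the randomized variants I would apply the second form of Theorem \ref{thm: general regret bound} at level $\delta/2$ and take a union bound with $\mathcal{E}_1$. The same chain $\Psi_t \leq \Psi_t^+(\mu_t^\IDS) \leq \Psi_t^+(x_t^\UCB) \leq 8 \beta_T^2 R^2$ from Lemma \ref{lemma: bounding the regret information surrogate} applies, so the square-root term becomes $\oO(R \beta_T^\delta \sqrt{T(\gamma_T + \log(1/\delta))})$. The additive $\oO(S \log(T^2/\delta))$ contribution is lower order since $\beta_T^\delta$ grows at most polylogarithmically in $T$. To justify using the sharper form of Theorem \ref{thm: general regret bound} I need the hypothesis $I_t(x_t) \leq 1$; this follows because $\sigma_t(x)^2 \leq \sigma_0(x)^2 = \|x\|_{V_0^{-1}}^2$ is uniformly bounded on the compact set $\xX$ and $\rho$ is continuous and strictly positive, so after rescaling the information function by a constant (which only changes $\Psi_t$ by the same constant) one may assume $I_t \leq 1$ throughout.

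The main obstacle I anticipate is the bookkeeping around the union bound and the $I_t \leq 1$ normalization: the concentration event for the estimator, the Freedman-type event from Theorem \ref{thm: general regret bound}, and the normalization constant for $I_t$ all need to be chosen consistently without blowing up the leading constants or polluting the $\oO$-notation. The actual derivation is otherwise a direct composition of results already stated, and the novel content is essentially captured in Lemma \ref{lemma: bounding the regret information surrogate}, which converts the implicit optimization defining IDS into the explicit noise-dependent quantity $R$ that appears in the final bound.
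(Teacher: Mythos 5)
Your proposal is correct and follows essentially the same route as the paper, which itself only sketches the argument as a direct composition of Lemma \ref{lemma: bounding the regret information surrogate}, Theorem \ref{thm: general regret bound} or \ref{thm: general regret bound deterministic}, and the concentration inequalities of Lemmas \ref{lemma: concentration inequaliy for linear least squares} and \ref{lemma: concentration inequalities for RKHS regression}. Your handling of the $\min$ over the UCB argmax set and of the $I_t \leq 1$ normalization is, if anything, more explicit than the paper's own treatment.
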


\begin{table}[]
	\centering
	\begin{tabular}{lllll}
	$f$ 	& $\beta_T^\delta$                                            & $\gamma_T$                   & $R_T$                                                                                                                                                  \\ \hline
		linear                & $\oO\Big(\sqrt{d\log(T) + \log(\frac{1}{\delta})}\Big)$               & $\oO\big(d\log(T)\big)$     & $\oO\left( (d\log(T) + \log\frac{1}{\delta})\sqrt{T}\right)$                                        \\
		\begin{tabular}[c]{@{}l@{}}RKHS with\\ RBF kernel\end{tabular}  & $\oO\left(\sqrt{\log(T)^d + \log(\frac{1}{\delta})}\right)$ & $\oO\left(\log(T)^d\right)$ & $\oO\left((\log(T)^d + \log(\frac{1}{\delta})) \sqrt{T} \right)$ 
	\end{tabular}
	\caption{Summary of worst case regret bounds for IDS-F/IDS-UCB. For the bounds on $\beta_T^\delta$ and  $\gamma_T$, refer to \cite[]{Abbasi-YadkoriImprovedAlgorithmsLinear2011, SrinivasGaussianProcessOptimization2010}, including bounds for further kernels. }
	\label{tbl:regret bounds}
\end{table}

We summarize the resulting regret bounds in Table \ref{tbl:regret bounds}. Note that in the homoscedastic setting, we recover known bounds for UCB via Lemma \ref{lemma: bounding the regret information surrogate}. In settings with heteroscedastic noise, our bounds can be much better, for example as depicted in Example \ref{exa: ucb and ts fail}, where the noise constant $R$, as defined in the Corollary, is smaller for IDS than for UCB. To obtain bounds adaptive to more general heteroscedastic settings, one would need to bound the sum $\sum_{t=1}^T \Psi_t$ directly in relation to $\rho$ and the policy; and general instance depending bounds are a significant undertaking even for linear functions \citep{LattimoreEndOptimismAsymptotic2017}. As our bounds are \emph{worst-case guarantees}, the actual instance-dependent regret can be rather different; see also the experiments in the next section.

\section{Experiments} \label{section: experiments}
We show a simple, synthetic experiment, where we fix a linear function in $\RR^3$ and for each experiment randomly generated a set of 30 actions contained in the Euclidean unit ball. In the homoscedastic setting (Figure \ref{fig: homoscedastic_A}) we use Gaussian noise $\nN(0, \rho^2)$ with constant noise bound $\rho=0.5$, and compare to UCB and TS. We define TS directly using the posterior, and do not employ oversampling as needed for the frequentist guarantees in \cite{AgrawalThompsonSamplingContextual2013}. In the heteroscedastic setting (Figure \ref{fig: heteroscedastic_C}), we chose the bound for each action uniformly in $[0.1, 1]$, and compare to UCB and TS, which use $1$ as an upper bound on the noise, and to W-UCB and W-TS, which use the weighted least squares estimator and Lemma \ref{lemma: concentration inequaliy for linear least squares}. 
For our methods, we show D/IDS-F, IDS-UCB and IDS-TS, where the latter policy is defined by $I_t^\TS(x) = \II(x;f(x_t^\TS)|\fF_{t-1})$ with respect to a \emph{proposal action} $x_t^\TS$ from Thomson Sampling; but we currently do not have regret bounds for IDS-TS. We omit the regret curve of DIDS-UCB and DIDS-TS as it was too close to the respective randomized version. For all methods, we calculate confidence bounds from Lemma \ref{lemma: concentration inequaliy for linear least squares} directly using the determinant and the matrix $V_T$, and we fix $\delta=0.01$, $\lambda=1$. Error bars show 2 times standard error of 500 trials. We observe that our methods stay competitive with UCB/TS in the homoscedastic setting, expect for IDS-F, which is over-exploring, likely because $I_t^F$ captures global information about $f$ irrespective of the location of the optimum. With heteroscedastic noise, IDS-UCB/IDS-TS show an advantage over UCB/TS, demonstrating the importance of using the noise both for estimation and exploration.
Further experiments, including a simulation of Example \ref{exa: ucb and ts fail}, can be found in Appendix \ref{app: further experiments}.\pagebreak

\begin{figure}
	\centering   	
	\hspace{-10pt}
	\subfigure[Homoscedastic Noise][t]{\label{fig: homoscedastic_A}\includegraphics[scale=1,trim={0 0 0 0},clip]{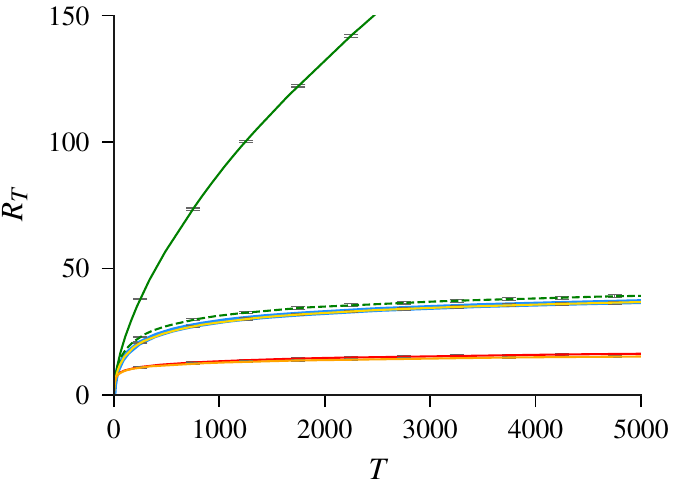}}
\hspace{-5pt}
	\subfigure[Heteroscedastic Noise][t]{\label{fig: heteroscedastic_C}\includegraphics[scale=1,trim={0.5cm 0 0 0},clip]{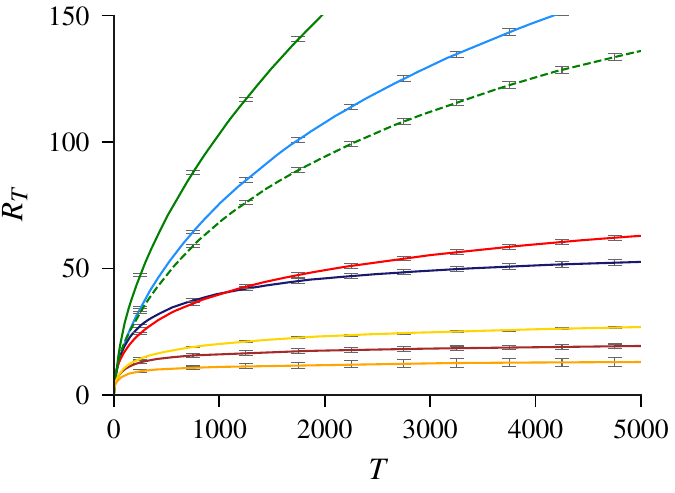}}~
	\includegraphics[scale=1,trim={130pt 0pt 10pt 0},clip]{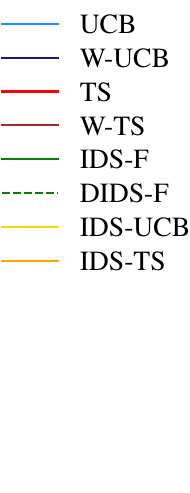}
	\hspace{-10pt}
	\caption{With homoscedastic noise, IDS-UCB and UCB achieve almost the same regret, and the same holds for IDS-TS and TS. Interestingly, while DIDS-F performs similar to IDS-UCB, randomization allows IDS-F to explore more; numerically, this regret curve is close to the theoretical bound $\oO(\sqrt{T\log T})$. In the heteroscedastic experiment, IDS-UCB is the best method with known frequentist guarantees, outperforming UCB/W-UCB, and similarly IDS-TS improves on TS/W-TS. This  demonstrates the importance of using the noise bound both for estimation and exploration.}
\end{figure}

\section{Conclusion and Future Work}\label{section: future work} 
We introduced a frequentist version of IDS, and use it to address bandits with heteroscedastic noise. However, we believe that the generality of our analysis opens up many promising directions for future work, some of which we outline below. As termed by \cite{RussoLearningOptimizeInformationDirected2014}, \emph{Information Directed Sampling} should be rather seen as a \emph{design principle} than an algorithm, as it depends on the user-defined choices of $I_t$ and $\Delta_t^+$. Empirically, we  observed that the performance of IDS strongly depends on the choice of $I_t$, hence better understanding and deriving new information functions is an important task for future work. For instance, \cite{RussoLearningOptimizeInformationDirected2014} use \linebreak $I_t(x) = \II(x; \argmax_{x \in \xX} f(x)|\fF_{t-1})$, but this is in general more expensive to compute and one might have to use approximation techniques, as for example done by \cite{WangMaxvalueEntropySearch2017} for $I_t(x) = \II(x; \max_{x \in \xX} f(x)|\fF_{t-1})$. The choice of $I_t$ can also be seen as introducing prior information on the problem structure to the algorithm, which could be useful in other settings as well.

Regarding bandits with heteroscedastic noise, we assumed that the heteroscedastic noise bound $\rho$ is known. For applications one might instead want to estimate the variance of the noise directly from the data, as done by \cite{CowanNormalBanditsUnknown2015} for Gaussian multi-armed bandits. We also point out a striking similarity of IDS-F and an instance-dependent lower bound, due to \cite{BurnetasOptimaladaptivepolicies1996}, see equation (8) in \citep{CowanNormalBanditsUnknown2015}. Finally, it needs to be seen if the notion of regret-information ratio is useful beyond the algorithms we study here. For example, it might be possible to bound the regret-information ratio of Thompson Sampling, where regret bounds in the frequentist setting remain notoriously harder to achieve than in the Bayesian setting.

\acks{We thank Kfir Levy for helpful discussions on this paper and for pointing out Lemma \ref{lemma: kakade_cocentration_lemma}. This research was supported by SNSF grant  200020\_159557.}
\newpage

\bibliography{references.bib}

\appendix

\section{Concentration Inequalities for Martingales}\label{app: concentration inequalites for martingales}
We first state the following consequence of Freedman's inequality.
\begin{lemma}[Similar to Lemma 3 in \citep{KakadeGeneralizationAbilityOnline2009}]\label{lemma: kakade_cocentration_lemma} Let $X_1, \dots, X_T$ be a martingale difference sequence with respect to a filtration $\fF_t$ such that $X_t \leq b$ holds for all $t = 1, \dots, T$. Denote the corresponding martingale by $M_T = \sum_{t=1}^T X_t$ and the sum of conditional variances by $V_T = \sum_{t=1}^T \EE[X_t^2|\fF_{t-1}]$. Then, for any $\beta  > 1$, $l = \left\lceil \frac{\log (T \lambda^{-2})}{\log \beta} \right \rceil$, $\lambda \geq 0$,
	\begin{align*}
		\PP[M_T \geq \lambda \max\{\lambda b, \sqrt{V_T}\}] \leq (l+1) \exp\left(- \frac{\lambda^2}{2\beta + \frac{2}{3}}\right) \text{ ,}
	\end{align*}
	and consequently, with probability at least $1 -  \delta$,
	\begin{align}
	M_T \leq \max \left\{ 4 b \log \frac{2T + 2}{\delta}, 2 \sqrt{V_T \log \frac{2T + 2}{\delta}} \right\}
	\end{align}
\end{lemma}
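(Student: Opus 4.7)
The plan is to combine Freedman's inequality with a peeling (stratification) argument over the random variance proxy $V_T$. Recall that Freedman's inequality guarantees, for any $a, \sigma^2 > 0$,
\begin{align*}
\PP\!\left[M_T \geq a \text{ and } V_T \leq \sigma^2 \right] \leq \exp\!\left(-\frac{a^2}{2\sigma^2 + 2ab/3}\right).
\end{align*}
I read the stated hypothesis $X_t \leq b$ as the two-sided $|X_t| \leq b$ (as is standard for the Kakade--Tewari lemma), so that $V_T \leq Tb^2$ almost surely and only finitely many peels are needed.

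I partition the range of $V_T$ into slices $A_0 = \{V_T \leq \lambda^2 b^2\}$ and $A_k = \{V_T \in (\lambda^2 b^2 \beta^{k-1}, \lambda^2 b^2 \beta^k]\}$ for $k = 1, \dots, l$; the choice $l = \lceil \log(T\lambda^{-2})/\log\beta\rceil$ guarantees $\lambda^2 b^2 \beta^l \geq Tb^2$, so the slices exhaust the sample space. On $A_0$, $\max\{\lambda b, \sqrt{V_T}\} = \lambda b$ and the event implies $M_T \geq \lambda^2 b$ with $V_T \leq \lambda^2 b^2$; Freedman then yields $\exp(-\lambda^2/(2+2/3))$. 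On $A_k$ with $k \geq 1$, the same maximum equals $\sqrt{V_T} \geq \lambda b \beta^{(k-1)/2}$, so the event implies $M_T \geq \lambda^2 b \beta^{(k-1)/2}$ with $V_T \leq \lambda^2 b^2 \beta^k$, and Freedman gives $\exp(-\lambda^2/(2\beta + (2/3)\beta^{-(k-1)/2}))$. Since $\beta > 1$ and $k \geq 1$ force $\beta^{-(k-1)/2} \leq 1$, each slice bound is dominated by the uniform $\exp(-\lambda^2/(2\beta + 2/3))$, and a union bound over the $l+1$ slices delivers the first inequality.

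For the high-probability version I calibrate $\beta = 5/3$ (so $2\beta + 2/3 = 4$) and $\lambda = 2\sqrt{\log((2T+2)/\delta)}$, giving $\exp(-\lambda^2/4) = \delta/(2T+2)$. With this $\beta$, a direct estimate gives $l \leq \lceil \log(T)/\log(5/3)\rceil \leq 2\log(T)+1 \leq 2T+1$ for every $T \geq 1$, hence $(l+1)\exp(-\lambda^2/(2\beta+2/3)) \leq \delta$. Reading off $\lambda^2 b = 4b\log((2T+2)/\delta)$ and $\lambda \sqrt{V_T} = 2\sqrt{V_T \log((2T+2)/\delta)}$ recovers the stated maximum.

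The main obstacle is the peeling bookkeeping: one has to verify that the cruder uniform exponent $-\lambda^2/(2\beta + 2/3)$ really dominates every slice (trivial on $A_k$ for $k\geq 1$, but needing a separate comparison on $A_0$, where the true exponent is actually smaller), and that the specific choice $\beta = 5/3$ makes the prefactor $l+1$ cleanly absorbable into the $2T+2$ appearing in the target bound.
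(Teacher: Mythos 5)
Your proof is correct and follows essentially the same route as the paper's: Freedman's inequality combined with geometric peeling over $V_T$ using the slices $\alpha_k=\lambda^2b^2\beta^k$, the same per-slice exponent computation, and the same calibration $\beta=5/3$, $\lambda=2\sqrt{\log((2T+2)/\delta)}$ (the paper uses $\lambda=2\sqrt{\log((l'+1)/\delta)}$ with $l'+1\le 2T+2$, which amounts to the same bound). Your explicit two-sided reading of $X_t\le b$ to guarantee $V_T\le Tb^2$, and your check that the $A_0$ slice is dominated by the uniform exponent, match what the paper does implicitly.
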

\begin{proof} Our proof uses the same ideas as in the proof of Lemma 3 of \cite{KakadeGeneralizationAbilityOnline2009} and is included for completeness; along the way we also get rid of the assumptions $\delta < 1/e$ and $T \geq 3$ used in the original lemma. Define $l = \left\lceil \frac{\log T \lambda^{-2}}{\log \beta} \right \rceil$, and $\alpha_i = \lambda^2 b^2 \beta^i$ for $i=0, \dots, l$, further set $\alpha_{-1} = 0$ for notational convenience. Note that since $V_T \leq T b^2$, our choice of $l$ implies $\alpha_l \geq V_T$. Then,
	\begin{align*}
		&\+\PP[M_T \geq \lambda \max\{\lambda b, \sqrt{V_T}\}]\\
		& = \PP[M_T \geq \lambda \max\{\lambda b, \sqrt{V_T}\},\; \alpha_{i-1} \leq V_T \leq \alpha_i \text{ for } i = 0, \dots, l ]\\
		& \leq \sum_{i=0}^l \PP[M_T \geq \lambda\max\{\lambda b, \sqrt{  V_T} \},\; \alpha_{i-1} \leq V_T \leq \alpha_i]\\
		& \leq \sum_{i=0}^l \PP[M_T \geq \lambda \max\{\lambda b, \sqrt{\alpha_{i-1}}\},\;  V_T \leq \alpha_i]\\
		&\qeqlabel{\leq}{qeq: kt-1} \sum_{i=0}^l \exp\left(- \frac{\lambda^2 \max\{\lambda^2 b^2, \alpha_{i-1}\}}{2\alpha_i + \frac{2}{3}b \lambda \max\{\lambda b, \sqrt{\alpha_{i-1}}\}}\right) \\
		&= \exp\left(-\frac{\lambda^2}{2 + \frac{2}{3}}\right) + \sum_{i=1}^l \exp\left(- \frac{\lambda^2}{2 \beta + \frac{2}{3} \sqrt{\frac{1}{\beta^{i-1}}} }\right)\\
		&\qeqlabel{\leq}{qeq: kt-2} (l+1)\exp\left(-\frac{\lambda^2}{2\beta + \frac{2}{3}}\right) \text{ .}
	\end{align*}
	Here, \qeqref{qeq: kt-1} is Freedman's inequality \cite[]{Freedmantailprobabilitiesmartingales1975}, and \qeqref{qeq: kt-2} uses that $\beta > 1$. This shows the first part of the lemma. For the second part, note that for $T = 1$, the claim is trivially true by $X_1 \leq b$, hence we assume $T \geq 2$. If also $\lambda \geq 1$, we can upper bound $l \leq \left\lceil \frac{\log T}{\log \beta} \right \rceil =: l'$. We  choose $\beta = 5/3$, and set $\lambda = \sqrt{\log \frac{l' + 1}{\delta} (2 \beta + \frac{2}{3})} = 2 \sqrt{\log \frac{l' + 1}{\delta}}$, such that indeed $\lambda \geq 2 \sqrt{\log\left(\frac{\log(2)}{\log(5/3)} + 1 \right) }\geq 1$ for $T \geq 2$, hence proving the claim.
\end{proof}

The next concentration inequality for supermartingales is an anytime variant of Corollary 2.7 by \cite{FanExponentialinequalitiesmartingales2015}, and might be of independent interest.
\begin{theorem}\label{thm: any time fan concentration} Let $S_T = \sum_{t=1}^T \xi_t$ be a sum of supermartingale differences $\xi_t$ with filtration $\{\fF_t\}$. Further, let $U_t$ be a non-negative predictable process, such that $\xi_t \leq U_t$ holds for all $t \geq 1$. Define
	\begin{align*}
	C_t^2 =	\begin{cases} \EE[\xi_t^2|\fF_{t-1}], & \text{if } \EE[\xi_t^2|\fF_{t-1}] \geq U_t^2\;,\\
	\frac{1}{4}\left(U_t + \frac{\EE[\xi_t^2|\fF_{t-1}]}{U_t}\right)^2 & \text{otherwise.}
	\end{cases}
	\end{align*}
	Further denote $A_T = \sum_{t=1}^T C_t^2$. Then, for any fixed positive sequence $(l_t)_{t=1}^T$, with probability at least $1-\delta$, 
	\begin{align*}
	\forall\; T \geq 1, \quad S_T \leq \sqrt{2(A_T + l_T) \log\left(\frac{1}{\delta}\frac{(A_T + l_T)^{1/2}}{l_T^{1/2}}\right)} \text { .}
	\end{align*}
\end{theorem}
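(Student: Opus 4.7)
The plan is to combine an exponential supermartingale with the method of mixtures and Ville's maximal inequality, following the template of \cite{Abbasi-YadkoriImprovedAlgorithmsLinear2011}. First, I would establish a per-step subgaussian-type MGF bound: for every $\lambda \geq 0$ and every $t \geq 1$,
\begin{align*}
\EE[\exp(\lambda \xi_t)\,|\,\fF_{t-1}] \,\leq\, \exp\!\big(\tfrac{1}{2}\lambda^2 C_t^2\big),
\end{align*}
which is precisely the MGF content of Corollary 2.7 in \cite{FanExponentialinequalitiesmartingales2015}. The piecewise definition of $C_t^2$ is the tightest subgaussian proxy interpolating between the ``high variance'' regime $\EE[\xi_t^2|\fF_{t-1}] \geq U_t^2$ (where it reduces to the conditional variance) and the ``bounded'' regime, in which the upper envelope $\xi_t \leq U_t$ is exploited through the Hoeffding-style quantity $\tfrac{1}{2}(U_t + \EE[\xi_t^2|\fF_{t-1}]/U_t)$. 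Granted this MGF bound, the process
\begin{align*}
M_t^\lambda \,:=\, \exp\!\big(\lambda S_t - \tfrac{1}{2}\lambda^2 A_t\big)
\end{align*}
is a non-negative $\{\fF_t\}$-supermartingale with $\EE M_0^\lambda = 1$ for each fixed $\lambda \geq 0$.

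Next, I would mix the family $\{M_t^\lambda\}_{\lambda \geq 0}$ against a half-normal prior on $\lambda$ to remove the dependence on the free scale. For $l > 0$ let $h_l(\lambda) = \sqrt{2l/\pi}\,\exp(-\tfrac{1}{2}\lambda^2 l)$ on $[0,\infty)$ and define $\bar M_t := \int_0^\infty M_t^\lambda\, h_l(\lambda)\, d\lambda$. By Tonelli, $\bar M_t$ is again a non-negative supermartingale with $\EE \bar M_0 \leq 1$. Completing the square in $\lambda S_t - \tfrac{1}{2}\lambda^2(A_t + l)$ and lower-bounding the restricted Gaussian integral on $[0,\infty)$ by half of its full-line value (valid when $S_t \geq 0$; the case $S_t < 0$ makes the target inequality trivial) yields the closed-form bound
\begin{align*}
\bar M_t \,\geq\, \sqrt{\tfrac{l}{A_t + l}}\,\exp\!\Big(\tfrac{S_t^2}{2(A_t+l)}\Big).
\end{align*}
Applying Ville's maximal inequality then gives $\PP[\sup_{t\geq 1} \bar M_t \geq 1/\delta] \leq \delta$, and on the complementary event, taking logarithms and rearranging delivers, simultaneously for all $T \geq 1$,
\begin{align*}
S_T \,\leq\, \sqrt{2(A_T + l)\,\log\!\Big(\tfrac{1}{\delta}\sqrt{(A_T + l)/l}\Big)},
\end{align*}
which is exactly the claimed inequality with $l$ in place of $l_T$.

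To accommodate a genuine time-varying sequence $(l_t)$ as stated, the simplest route is to treat $l$ as a design parameter fixed in advance: in the intended application (Lemma~\ref{lemma: concentration of conditinal means}) the inequality is only evaluated at a single $T$ with the corresponding $l_T$, so plugging $l := l_T$ into the fixed-scale argument above suffices; one can alternatively mix over a further geometric grid of scales and pay a mild logarithmic union bound if true simultaneity in $(l_t)$ is needed. The main obstacle I expect lies in the very first step: verifying the MGF bound with the \emph{tight} piecewise proxy $C_t^2$ in the low-variance regime requires a one-dimensional variational argument over centered distributions supported on $(-\infty, U_t]$ with prescribed second moment, and I would cite \cite{FanExponentialinequalitiesmartingales2015} for this rather than re-derive it. Once the MGF bound is in hand, the mixture-and-Ville step is routine and the anytime uniformity across $T$ comes for free from applying Ville's inequality to the single supermartingale $\bar M_t$.
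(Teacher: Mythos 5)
Your proposal follows essentially the same route as the paper's proof: the per-step MGF bound $\EE[e^{\lambda\xi_t}|\fF_{t-1}] \leq \exp(\tfrac{\lambda^2}{2}C_t^2)$ is taken from Fan et al.\ (the paper cites their Corollary 2.6 for exactly this), the exponential process $M_t^\lambda = \exp(\lambda S_t - \tfrac{\lambda^2}{2}A_t)$ is mixed against a half-normal prior on $[0,\infty)$, the square is completed, the restricted Gaussian integral is lower-bounded using $S_t \geq 0$, and uniformity in $T$ comes from a Ville/optional-stopping argument, which is what the paper does via the explicit stopping time $\tau$.

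The one place you diverge is the treatment of the time-varying sequence $(l_t)$. The theorem asserts the bound simultaneously for all $T \geq 1$ with $l_T$ appearing inside, and this uniformity is actually used downstream: in Lemma~\ref{lemma: concentration of conditinal means} one takes $l_t = b_t$ with $b_t$ non-decreasing and needs the anytime guarantee. Your fallback of fixing $l := l_T$ gives only a fixed-$T$ statement, and your alternative of mixing over a geometric grid of scales reintroduces a logarithmic union-bound cost --- which is precisely the $\sqrt{\log T}$ factor this lemma exists to avoid (see the discussion after Theorem~\ref{thm: general regret bound}). The paper instead uses a sequence of mixing densities $f_{\Lambda_t}(\lambda) \propto \exp(-\tfrac12\lambda^2 l_t)\I{\lambda\ge0}$, defines $M_t = \EE[M_t^{\Lambda_t}|\fF_\infty]$, and applies the stopping-time argument to this time-indexed mixture, obtaining the claim with the time-varying $l_T$ and no extra log factor. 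So to match the stated theorem you would need to carry the $t$-dependent mixing density through the Ville/stopping-time step rather than freeze a single scale. Everything else in your outline, including the variational justification of the piecewise proxy $C_t^2$ being delegated to Fan et al., matches the paper.
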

\noindent
The proof combines Corollary 2.7 in \citep{FanExponentialinequalitiesmartingales2015}, with the \emph{method of mixtures} as used in the proof of Theorem 1 in \citep{Abbasi-YadkoriImprovedAlgorithmsLinear2011}. We start with the following lemma.
\begin{lemma}\label{lemma: supermartingale for fan concentration}
	Let $\xi_t$, $C_t$ as in Theorem \ref{thm: any time fan concentration}, and define for $\lambda \geq 0$, $t \geq 1$,
	\begin{align}
	M_t^\lambda = \exp\left(\sum_{s=1}^t \lambda \xi_s - \frac{\lambda^2}{2}C_s^2\right) \text{ .}
	\end{align}
	Further, let $\tau$ be a stopping time with respect to the filtration $\{\fF_t\}$. Then $M_t^\lambda$ is a supermartingale, $M_\tau^\lambda$ is almost surely well-defined, and $\EE[M_\tau^\lambda] \leq 1$.
\end{lemma}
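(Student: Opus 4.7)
The plan is to verify the three claims in sequence: the supermartingale property of $M_t^\lambda$, the almost-sure well-definedness of $M_\tau^\lambda$, and the expectation bound $\EE[M_\tau^\lambda] \leq 1$.

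First, I would reduce the supermartingale property to a one-step conditional bound on the multiplicative increment $Z_t := M_t^\lambda / M_{t-1}^\lambda = \exp(\lambda \xi_t - \tfrac{\lambda^2}{2} C_t^2)$, namely $\EE[Z_t \mid \fF_{t-1}] \leq 1$ for every $t \geq 1$. Since $U_t$ is predictable and $C_t^2$ is a function of $U_t$ and $\EE[\xi_t^2 \mid \fF_{t-1}]$, both $U_t$ and $C_t^2$ are $\fF_{t-1}$-measurable, so this is equivalent to the conditional moment generating function bound
\[
\EE\bigl[\exp(\lambda \xi_t) \bigm| \fF_{t-1}\bigr] \leq \exp\bigl(\tfrac{\lambda^2}{2} C_t^2\bigr).
\]
This is precisely Corollary 2.7 of Fan, Grama and Liu applied conditionally to the supermartingale difference $\xi_t \leq U_t$ satisfying $\EE[\xi_t \mid \fF_{t-1}] \leq 0$: the piecewise definition of $C_t^2$ — equal to $\EE[\xi_t^2 \mid \fF_{t-1}]$ in the large-variance regime and to $\tfrac{1}{4}(U_t + \EE[\xi_t^2 \mid \fF_{t-1}]/U_t)^2$ otherwise — is the exact choice for which this MGF inequality holds. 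Together with $M_0^\lambda = 1$, this shows that $M_t^\lambda$ is a nonnegative supermartingale.

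Second, since $M_t^\lambda \geq 0$ and $\EE[M_t^\lambda] \leq 1$, Doob's martingale convergence theorem gives an almost sure limit $M_\infty^\lambda$, finite almost everywhere. For a stopping time $\tau$ with possibly $\tau = \infty$, setting $M_\tau^\lambda := M_\infty^\lambda$ on $\{\tau = \infty\}$ makes $M_\tau^\lambda$ almost surely well-defined. Third, I would apply optional stopping to the bounded stopping times $\tau \wedge n$ to obtain $\EE[M_{\tau \wedge n}^\lambda] \leq \EE[M_0^\lambda] = 1$ for every $n \geq 1$. Letting $n \to \infty$, the convergence $M_{\tau \wedge n}^\lambda \to M_\tau^\lambda$ almost surely combined with nonnegativity allows Fatou's lemma to conclude $\EE[M_\tau^\lambda] \leq \liminf_n \EE[M_{\tau \wedge n}^\lambda] \leq 1$.

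The main obstacle is the conditional MGF bound in the first step. The piecewise form of $C_t^2$ is not guessable from the classical Bennett--Bernstein estimates alone; it arises from a careful Taylor-type expansion of $e^{\lambda x}$ on $(-\infty, U_t]$ that separates the regime where the deterministic upper bound $U_t$ is the binding constraint from the regime where it is dominated by the conditional second moment. Once that MGF bound is in hand, the remainder of the proof is standard nonnegative supermartingale machinery and does not require further ingredients.
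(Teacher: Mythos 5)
Your proof is correct and follows essentially the same route as the paper: the one-step conditional MGF bound $\EE[e^{\lambda\xi_t}\mid\fF_{t-1}]\leq\exp(\tfrac{\lambda^2}{2}C_t^2)$ imported from Fan, Grama and Liu (the paper cites their Corollary 2.6 for this step rather than 2.7, which is their concentration inequality), followed by the multiplicative supermartingale decomposition, the convergence theorem for non-negative supermartingales, and Fatou's lemma applied to $M_{\tau\wedge n}^\lambda$.
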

\begin{proof} The proof is along the lines of Lemma 8 in \citep{Abbasi-YadkoriImprovedAlgorithmsLinear2011}, where we replace the subgaussian condition by the suitable analog to showing that $M_t^\lambda$ is a supermartingale. Let
	\begin{align*}
	D_s^\lambda = \exp\left(\lambda \xi_s - \frac{\lambda}{2} C_s^2\right) \text{ .}
	\end{align*}
	By Corollary 2.6 of \cite{FanExponentialinequalitiesmartingales2015}, we have that $\EE[e^{\lambda\xi_s} |\fF_{s-1}] \leq \exp\left(\tfrac{\lambda^2}{2}C_s^2\right)$ for all $\lambda > 0$, and consequently, $\EE[D_s|\fF_{s-1}] \leq 1$. Therefore, $\EE[M_t^\lambda |\fF_{t-1}] = \EE[D_t^\lambda|\fF_{t-1}] M_{t-1}^\lambda \leq M_{t-1}^\lambda$, which shows that $M_t^\lambda$ is a supermartingale such that $\EE[M_t^\lambda] \leq 1$ for all $t \geq 1$. The rest of the argument is standard \cite[compare for example Theorem 5.7.6 in][]{DurrettProbabilitytheoryexamples2010}; by the convergence theorem for non-negative supermartingales, $M_\tau^\lambda$ is well defined for any stopping time $\tau \leq \infty$, then using Fatou's lemma it follows that $M_\tau^\lambda \leq \liminf_{t \rightarrow \infty} M_{\tau \wedge t}^\lambda \leq 1$. 
\end{proof}

To prove Theorem \ref{thm: any time fan concentration}, we now use the \emph{method of mixtures}, similar to Theorem 1 in \citep{Abbasi-YadkoriImprovedAlgorithmsLinear2011}. The main difference is that the supermartingale $M_t^\lambda$ from the the previous lemma is only defined for $\lambda \geq 0$, which requires to choose a mixing density supported on $[0, \infty)$.

\begin{proof}\textbf{of Theorem \ref{thm: any time fan concentration}}\ 
	Remember that $S_t = \sum_{s=1}^t \xi_s$, $A_t =\sum_{s=1}^t C_s^2$ and $M_t^\lambda = \exp\left(\lambda S_t - \tfrac{\lambda^2}{2} A_t\right)$.
	Further, let $\Lambda = (\Lambda_t)_{t\geq 1}$ be a sequence of independent Gaussian random variable truncated to $\RR^+ = [0,\infty)$ with densities $f_{\Lambda_t}(\lambda) =  c(l_t)\exp\left(-\tfrac{1}{2}\lambda^2 l_t\right) \I{\lambda \geq 0}$ where $c(A) = \sqrt{\frac{2 A}{\pi}}$ is a normalizing constant. Using $\Lambda$ as a \emph{mixing distribution} we define
	\begin{align}
	M_t = \EE[M_t^{\Lambda_t}|\fF_\infty] \text{ ,}
	\end{align}
	where $\fF_\infty = \sigma\left(\cup_{t=1}^\infty \fF_t\right)$ is the tail $\sigma$-algebra of the filtration $\fF_t$. In particular, using Fubini's theorem, we still get $\EE[M_\tau] = \EE[\EE[M_\tau^{\Lambda_t}|\Lambda]] \leq 1$. In the next step, we explicitly calculate $M_t$ for any $t \geq 1$,
	\begin{align*}
	M_t &= \int_{\RR^+}\exp\left(\lambda S_t - \tfrac{\lambda^2}{2}A_t\right)f_{\Lambda_t}(\lambda)\; d\lambda\\
	&= \int_{\RR^+}\exp\left(-\frac{1}{2}\left(\lambda -\frac{ S_t }{A_t}\right)^2 A_t + \frac{1}{2}\frac{S_t^2}{A_t}\right)f_{\Lambda_t}(\lambda)\; d\lambda\\
	&= \exp\left(\frac{1}{2}\frac{S_t^2}{A_t}\right)\int_{\RR^+}\exp\left(-\frac{1}{2}\left(\lambda -\frac{ S_t }{A_t}\right)^2 A_t\right)f_{\Lambda_t}(\lambda)\; d\lambda\\
	&= c(l_t)\exp\left(\frac{1}{2}\frac{S_t^2}{A_t}\right)\int_{\RR^+}\exp\left(-\frac{1}{2}\left(\left(\lambda - S_t /A_t\right)^2 A_t + \lambda^2l_t\right)\right)\; d\lambda \text{ .}
	\end{align*}
	Completing the square yields
	\begin{align*}
	\left(\lambda - \frac{S_t}{A_t}\right)^2 A_t + \lambda^2A= \left(\lambda - \frac{S_t}{l_t + A_t}\right)^2 (l_t + A_t) + \frac{S_t^2}{A_t} - \frac{S_t^2}{l_t + A_t} \text{ ,}
	\end{align*}
	and with the previous equation,
	\begin{align*}
	M_t &= c(l_t)\exp\left(\frac{1}{2}\frac{S_t^2}{l_t + A_t}\right)\int_{\RR^+}\exp\left(-\frac{1}{2}\left(\left(\lambda - S_t /(l_t + A_t)\right)^2 (l_t + A_t)\right)\right)\; d\lambda\\
	&\qeqlabel{\geq}{qeq: mm-A-1} \I{S_t \geq 0} c(l_t)\exp\left(\frac{1}{2}\frac{S_t^2}{l_t + A_t}\right)\int_{\RR^+}\exp\left(-\frac{1}{2}\left(\left(\lambda - S_t /(l_t + A_t)\right)^2 (l_t + A_t)\right)\right)\; d\lambda \\
	&\qeqlabel{\geq}{qeq: mm-A-2} \I{S_t \geq 0} c(l_t)\exp\left(\frac{1}{2}\frac{S_t^2}{l_t + A_t}\right)\int_{\RR^+}\exp\left(-\frac{1}{2}\left(\lambda^2 (l_t + A_t)\right)\right)\; d\lambda \\
	&= \I{S_t \geq 0}\frac{c(l_t)}{c(l_t + A_t)}\exp\left(\frac{1}{2}\frac{S_t^2}{l_t + A_t}\right) \text{ .}
	\end{align*}
	In \qeqref{qeq: mm-A-1} we introduced an indicator function and used that all other terms are positive. To get \qeqref{qeq: mm-A-2}, we first applied a change of variables $\lambda' = \lambda - S_t /(l_t + A_t)$ and then made use of $S_t \geq 0$ to reduce the integration range (and again that the integrand is positive).
	
	\noindent
	A final application of Markov's inequality yields
	\begin{align*}
	&\+\PP\left[S_\tau \geq \sqrt{2 (l_\tau + A_\tau) \log\left(\frac{1}{\delta}\frac{(l_\tau + A_\tau)^{1/2}}{l_\tau^{1/2}}\right)}\right] \\
	&= \PP\left[\frac{c(l_\tau)}{c(l_\tau + A_\tau)}\exp\left(\frac{1}{2}\frac{S_\tau^2}{l_\tau + A_\tau}\right) \geq \frac{1}{\delta} , S_\tau \geq 0\right] \\
	&\leq \delta \cdot \EE\left[\I{S_\tau \geq 0}\frac{c(l_\tau)}{c(l_\tau + A_\tau)}\exp\left(\frac{1}{2}\frac{S_\tau^2}{l_\tau + A_\tau}\right)\right]\\
	&\qeqlabel{\leq}{qeq: mm-B-1} \delta \cdot \EE[M_\tau] \qeqlabel{\leq}{qeq: mm-B-2} \delta \text{ ,}
	\end{align*}
	where \qeqref{qeq: mm-B-1} uses the inequality for $M_t$ derived above, and \qeqref{qeq: mm-B-2} follows from Lemma \ref{lemma: supermartingale for fan concentration}. 
	
	To get the anytime result as stated in the Theorem, we use the same argument as in \citep{Abbasi-YadkoriImprovedAlgorithmsLinear2011} on the stopping time
	\begin{align*}
	\tau = \min\left\{t \geq 1 \;|\; S_t \geq \sqrt{2 (l_t + A_t) \log\left(\frac{1}{\delta}\frac{(l_t + A_t)^{1/2}}{l_t^{1/2}}\right)} \right\} \text{ .}
	\end{align*}
	Expressing the quantity of interest in terms of $\tau$, and applying the previous inequality yields
	\begin{align*}
	&\+ \PP\left[ S_t \geq \sqrt{2 (l_t + A_t) \log\left(\frac{1}{\delta}\frac{(l_t + A_t)^{1/2}}{l_t^{1/2}}\right)} \text{ for any } t \geq 1\right] \\
	&=	\PP[\tau < \infty] \\
	&= \PP\left[\tau < \infty,\, S_\tau \geq \sqrt{2 (l_\tau + A_\tau) \log\left(\frac{1}{\delta}\frac{(l_\tau + A_\tau)^{1/2}}{l_\tau^{1/2}}\right)}\right] \\
	&\leq \PP\left[S_\tau \geq \sqrt{2 (l_\tau + A_\tau) \log\left(\frac{1}{\delta}\frac{(l_\tau + A_\tau)^{1/2}}{l_\tau^{1/2}}\right)} \right]\\
	&\leq \delta \text{ .}
	\end{align*}
	This completes the proof.
\end{proof}
\noindent
As a consequence of the previous result, we have the following lemma.
\begin{lemma*}{\textbf{\ref{lemma: concentration of conditinal means}}} {(in Section \ref{section: general regret framework})} 
	Let $X_t$ be a non-negative stochastic process adapted to a filtration $\{\fF_t \}$. Assume that $X_t \leq b_t$ for a fixed, non-decreasing sequence $\{b_t\}$. Define $m_t = \EE[X_t|\fF_{t-1}]$ and $M_T = \sum_{t=1}^T m_t$, and let $(l_t)_{t\geq 1}$ be any fixed, positive sequence. Then, with probability at least $1-\delta$,
	\begin{align*}
	\forall \; T\in \NN, \quad \sum_{t=1}^T m_t - X_t &\leq \sqrt{2(b_T M_T + l_T) \log\left(\frac{1}{\delta} \frac{(b_T M_T + l_T)^{1/2}}{l_T^{1/2}}\right)} \text{ .}
	\end{align*}
	Further, if $b_T \geq 1$, with probability at least $1-\delta$ for any $T \geq 1$ it holds that,
	\begin{align*}
	\sum_{t=1}^T m_t \leq 2\sum_{t=1}^T X_t + 4 b_T \log\frac{1}{\delta} + 8  b_T\log(4 b_T) + 1 \text{ .}
	\end{align*}
\end{lemma*}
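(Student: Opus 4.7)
My strategy is to apply the new anytime supermartingale inequality (Theorem~\ref{thm: any time fan concentration}) to the martingale differences $\xi_t := m_t - X_t$, and then invert the resulting bound to prove the second inequality.

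\textbf{Step 1 (Setup and application of Theorem~\ref{thm: any time fan concentration}).} Note that $(\xi_t)$ is a martingale difference sequence since $\EE[\xi_t|\fF_{t-1}] = 0$, and because $X_t \geq 0$ we have the predictable upper bound $\xi_t \leq m_t$. So I take $U_t = m_t$ in the theorem. To control $C_t^2$, I would use the Bhatia--Davis inequality (Lemma~\ref{lemma: bhatia-davis}): since $X_t \in [0,b_t]$ with mean $m_t$, one has $\EE[\xi_t^2|\fF_{t-1}] = \Var[X_t|\fF_{t-1}] \leq m_t(b_t - m_t)$. I would then check both branches of the definition of $C_t^2$:
\begin{itemize}
\item If $\EE[\xi_t^2|\fF_{t-1}] \geq m_t^2$, then $C_t^2 = \EE[\xi_t^2|\fF_{t-1}] \leq m_t(b_t - m_t) \leq b_t m_t$.
\item Otherwise, $C_t^2 = \tfrac{1}{4}(m_t + \EE[\xi_t^2|\fF_{t-1}]/m_t)^2 \leq \tfrac{1}{4}(m_t + m_t)^2 = m_t^2 \leq b_t m_t$ (using $m_t \leq b_t$).
\end{itemize}
Hence $C_t^2 \leq b_t m_t$ in either case, and summing with $b_t$ nondecreasing gives $A_T \leq b_T M_T$.

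\textbf{Step 2 (First inequality).} Plugging $S_T = \sum_t \xi_t = M_T - \sum_t X_t$ into Theorem~\ref{thm: any time fan concentration} and using that the right-hand side is monotone in $A_T$, I can replace $A_T$ by its upper bound $b_T M_T$. This directly yields the first displayed inequality of the lemma, uniformly in $T$.

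\textbf{Step 3 (Inverting to obtain the second inequality).} Specializing the first inequality to $l_t \equiv 1$ and abbreviating $a = b_T \geq 1$, $Y = \sum_{t=1}^T X_t$, $M = M_T$, I get an inequality of the form
\[
M \leq Y + \sqrt{(aM+1)\,\log\!\bigl((aM+1)/\delta^2\bigr)} .
\]
Now I apply AM--GM in the form $\sqrt{uv} \leq \tfrac{u}{2c} + \tfrac{cv}{2}$ with $c = a$, $u = aM+1$, $v = \log((aM+1)/\delta^2)$, which turns the square root into $\tfrac{M}{2} + \tfrac{1}{2a} + \tfrac{a}{2}\log((aM+1)/\delta^2)$. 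Rearranging,
\[
M \leq 2Y + \tfrac{1}{a} + a\log(aM+1) + 2a\log(1/\delta).
\]
The main remaining obstacle is to resolve this self-referential bound, since $M$ still appears inside the logarithm. I would handle it by a standard inversion argument: substitute the claimed bound $M^\star := 2Y + 4a\log(1/\delta) + 8a\log(4a) + 1$ into the right-hand side and check that $a\log(aM^\star + 1) \leq 4a\log(4a) + 2a\log(1/\delta)$ when $a \geq 1$, using crude inequalities like $\log(x+y) \leq \log x + \log y + \log 2$ and absorbing the small $\tfrac{1}{a}$ term into the additive $1$. If $M \leq M^\star$ fails, one derives a contradiction from the sublinear growth of the right-hand side in $M$. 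The hard part is just this bookkeeping of constants; the core mechanism is the sublinearity of $\sqrt{M\log M}$ relative to $M$, exactly as hinted in the paper.
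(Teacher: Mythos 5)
Your Steps 1 and 2 are correct and coincide with the paper's argument: you take $\xi_t = m_t - X_t$ with predictable bound $U_t = m_t$, use Bhatia--Davis to get $\EE[\xi_t^2|\fF_{t-1}] \leq m_t(b_t-m_t)$, verify $C_t^2 \leq b_t m_t$ in both branches, and conclude $A_T \leq b_T M_T$, which gives the first inequality via Theorem \ref{thm: any time fan concentration}.

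Step 3 contains a genuine gap. After your AM--GM step you are left with the self-referential bound $M \leq 2Y + \tfrac{1}{a} + a\log(aM+1) + 2a\log(1/\delta)$, and you propose to close it by substituting $M^\star = 2Y + 4a\log(1/\delta) + 8a\log(4a) + 1$ and checking $a\log(aM^\star+1) \leq 4a\log(4a) + 2a\log(1/\delta)$. This check fails: since $M^\star$ contains the term $2Y$, the left-hand side is at least $a\log(2aY)$, which is unbounded in $Y$, while the right-hand side is a constant depending only on $a$ and $\delta$. The problem is not mere bookkeeping --- the intermediate inequality itself is too weak. Its largest solution in $M$ is the fixed point $M_0 = 2Y + \tfrac1a + a\log(aM_0+1) + 2a\log(1/\delta) \approx 2Y + a\log(2aY) + \cdots$, which exceeds the claimed bound by an additive $\Theta(a\log Y)$; so no inversion argument can recover the stated inequality from it. You could absorb the residual $a\log(aM+1)$ into $\tfrac{M}{4} + O(a\log a)$, but that degrades the coefficient of $Y$ from $2$ to $\tfrac83$ and changes the constants, so you would not obtain the lemma as stated. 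The paper avoids this by never letting a free-standing $\log M$ term appear: it takes $l_t = b_t$, splits the square root as $\sqrt{2b_T(M_T+1)\log\tfrac1\delta} + \sqrt{b_T(M_T+1)\log(M_T+1)}$, and absorbs the \emph{second} term directly into $\tfrac{M_T+1}{2\eta} + 2\eta b_T\log(2\eta b_T)$ using the technical inequality $\sqrt{ab\log(ab)} \leq a + b\log(b)$ (Lemma \ref{lemma: sqrt ab ln(ab) < a + b ln(b)}, where the hypothesis $b_T \geq 1$ enters). That is where the $8b_T\log(4b_T)$ term comes from, and it is the step your proof is missing.
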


\begin{proof}
	Clearly, $\xi_t = m_t - X_t$ is a martingale difference sequence such that $\xi_t \leq m_t$, and $m_t$ is a predictable process. This allows us to apply Theorem \ref{thm: any time fan concentration}. Remember that
	\begin{align*}
	C_t^2 =	\begin{cases} \EE[\xi_t^2|\fF_{t-1}], & \text{if } \EE[\xi_t^2|\fF_{t-1}] \geq m_t^2\;,\\
	\frac{1}{4}\left(m_t + \frac{\EE[\xi_t^2|\fF_{t-1}]}{m_t}\right)^2 & \text{otherwise,}
	\end{cases}
	\end{align*} 
	and in particular $C_t^2 \leq b_t m_t$. To see this, note that the Bhatia-Davis inequality (Lemma \ref{lemma: bhatia-davis}, Appendix \ref{app: useful inequalities}) implies $\EE[\xi_t^2|\fF_{t-1}] = \Var(X_t|\fF_{t-1}) \leq m_t (b_t - m_t) \leq m_t b_t$, and if $\EE[\xi_t^2|\fF_{t-1}] \leq m_t^2$, and further, if $m_t \leq b_t$ hold, $\frac{1}{4}\left(m_t + \frac{\EE[\xi_t^2|\fF_{t-1}]}{m_t}\right)^2 \leq m_t^2 \leq m_t b_t$. Consequently, $A_T = \sum_{t=1}^T C_t^2 \leq b_T \sum_{t=1}^T m_t = b_T M_T$. A direct application of Theorem \ref{thm: any time fan concentration} shows the first inequality. 
	
	For the second part, we use a trade-off parameter $\eta > 1$, and set $l_t = b_t$ to find 
	\begin{align*}
	&\+\sqrt{2(b_T M_T + b_T) \log\left(\frac{1}{\delta} \frac{(b_T M_T + b_T)^{1/2}}{b_T^{1/2}}\right)}\\
	&\qeqlabel{\leq}{qeq:ccm-1} \sqrt{2 b_T (M_T + 1)\log\left(\frac{1}{\delta}\right)} + \sqrt{b_T(M_T + 1) \log(M_T + 1)}\\
	&\qeqlabel{\leq}{qeq:ccm-2} \frac{M_T}{2\eta} + \frac{1}{2 \eta} + \eta b_T \log\left(\frac{1}{\delta}\right) + \frac{M_T}{2\eta} + \frac{1}{2\eta} + 2\eta b_T \log(2 \eta b_T)
	\end{align*}
	where for \qeqref{qeq:ccm-1} we used the inequality $\sqrt{a + b} \leq \sqrt{a} + \sqrt{b}$, and for \qeqref{qeq:ccm-2} we used $2\sqrt{ab} \leq a + b$ (Lemma \ref{lemma: 2 sqrt ab < a + b}) on the first term, and $\sqrt{ab \log(ab)} \leq a + b\log(b)$ (Lemma \ref{lemma: sqrt ab ln(ab) < a + b ln(b)}) on the second term with the condition $b_T \geq 1$.	We set $\eta=2$ and combine the previous inequalities. Solving for $M_T=\sum_{t=1}^T m_t$ yields
	\begin{align*}
	\sum_{t=1}^T m_t \leq 2 \sum_{t=1}^T X_t + 4 b_T \log\left(\frac{1}{\delta}\right) + 8  b_T \log(4 b_T)  + 1 \text{ .}
	\end{align*}
	This completes the proof.
\end{proof}

\section{Proof of Theorem \ref{thm: general regret bound}}\label{app: proof of main theorem}

We will write $\Delta_t = \Delta(x_t)$ and $I_t = I_t(x_t)$ for notational convenience.
\paragraph{Step 1) Bounding the regret by the expected regret.} 
We first use Lemma \ref{lemma: kakade_cocentration_lemma} to bound the regret in terms of the sum of expected instantaneous regret $\sum_{t=1}^T \EE[\Delta_t|\fF_{t-1}]$.
\begin{lemma}\label{lemma: bound regret by instantaneous regret}	
	For any policy $\pi$ we have with probability at least $1-\delta$,
	\begin{align*}
	R_T &\leq \tfrac{5}{4}  \sum_{t=1}^T \EE[\Delta_t|\fF_{t-1}] + 4 S \log\left(\tfrac{2 \log(T) + 2}{\delta}\right) \text{ .}
	\end{align*}
	Further, with probability at least $1-\delta$, at any time $T \geq 1$,
	\begin{align*}
	R_T &\leq \tfrac{5}{4}  \sum_{t=1}^T \EE[\Delta_t|\fF_{t-1}] + 4 S \log\left(\tfrac{4\pi^2 T^2}{3\delta} (\log(T) + 1)\right) \text{ .}
	\end{align*}
\end{lemma}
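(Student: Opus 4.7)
The plan is to apply Lemma \ref{lemma: kakade_cocentration_lemma} to the martingale difference sequence $X_t := \Delta_t - \EE[\Delta_t|\fF_{t-1}]$, whose partial sum equals $R_T - \sum_{t=1}^T \EE[\Delta_t|\fF_{t-1}]$. Since $0 \leq \Delta_t \leq S$, we immediately have $X_t \leq S$, so $b = S$ is a valid uniform upper bound, and the conditional variance admits the key estimate
\begin{align*}
\EE[X_t^2|\fF_{t-1}] = \Var(\Delta_t|\fF_{t-1}) \leq \EE[\Delta_t^2|\fF_{t-1}] \leq S \cdot \EE[\Delta_t|\fF_{t-1}],
\end{align*}
where the last step again uses $\Delta_t \leq S$. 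Summing gives $V_T \leq S \sum_{t=1}^T \EE[\Delta_t|\fF_{t-1}]$; this variance-regret link is what makes a Freedman/Bernstein-type bound yield a sub-linear leading term rather than an additive $\sqrt{T}$.

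Plugging the variance estimate into Lemma \ref{lemma: kakade_cocentration_lemma} gives, with probability at least $1-\delta$,
\begin{align*}
R_T - \sum_{t=1}^T \EE[\Delta_t|\fF_{t-1}] \leq \max\Bigl\{4 S L_T,\; 2\sqrt{S L_T \cdot {\textstyle\sum_{t=1}^T} \EE[\Delta_t|\fF_{t-1}]}\Bigr\},
\end{align*}
where $L_T$ abbreviates the $\log\frac{2\log T + 2}{\delta}$ factor emerging from the peeling step in the proof of Lemma \ref{lemma: kakade_cocentration_lemma}. I would then apply the weighted AM-GM inequality $2\sqrt{uv} \leq 4u + \tfrac{1}{4}v$ to the square-root branch, with $u = S L_T$ and $v = \sum_t \EE[\Delta_t|\fF_{t-1}]$, producing $4 S L_T + \tfrac{1}{4} \sum_t \EE[\Delta_t|\fF_{t-1}]$. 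Upper-bounding the $\max$ by the sum and rearranging yields the $\tfrac{5}{4}$ coefficient together with the additive $4 S L_T$, which is the first inequality.

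For the anytime version, I would instantiate the first inequality at every $T$ with confidence level $\delta_T := 6\delta/(\pi^2 T^2)$ and take a union bound over $T \geq 1$; since $\sum_{T \geq 1} \delta_T = \delta$, the event holds simultaneously for all $T$. Substituting $\delta_T$ into $L_T$ and applying elementary simplifications (such as $2\log T + 2 \leq 2(\log T + 1)$) reproduces the $\log$ expression stated in the lemma up to numerical constants. The main obstacle is essentially cosmetic — carefully tracking constants so that the AM-GM split lands on exactly $\tfrac{5}{4}$ — since the structural argument is forced on us by the conclusion: a Hoeffding-type bound would not suffice, because only the Bernstein variance term can be absorbed back into the regret through $\Var(\Delta_t|\fF_{t-1}) \leq S\, \EE[\Delta_t|\fF_{t-1}]$; the anytime extension is then a standard union-bound device.
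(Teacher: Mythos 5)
Your proof is correct and matches the paper's argument essentially step for step: the same martingale decomposition, the same variance--regret link $V_T \le S\sum_{t}\EE[\Delta_t|\fF_{t-1}]$ (the paper derives it from the Bhatia--Davis inequality, you from $\Var(\Delta_t|\fF_{t-1}) \le \EE[\Delta_t^2|\fF_{t-1}] \le S\,\EE[\Delta_t|\fF_{t-1}]$, which gives the same bound), the same application of Lemma \ref{lemma: kakade_cocentration_lemma} followed by an AM--GM split, and the same union bound with $\delta_T = 6\delta/(\pi^2 T^2)$. The only nit is that upper-bounding the $\max$ by the \emph{sum} of its two branches yields an additive $8 S L_T$ rather than $4 S L_T$; the paper instead notes that each branch separately is at most $\tfrac{1}{4}\sum_{t}\EE[\Delta_t|\fF_{t-1}] + 4 S L_T$, so the $\max$ is too.
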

\begin{proof}
	First, we rewrite the regret 
	\begin{align*}
	R_T &= \sum_{t=1}^T \Delta_t =  \sum_{t=1}^T \EE[\Delta_t|\fF_{t-1}] + \sum_{t=1}^T \left( \Delta_t -  \EE[\Delta_t|\fF_{t-1}] \right) \text{ ,}
	\end{align*}	
	such that we only need to bound the martingale difference sequence $D_t = \Delta_t -  \EE[\Delta_t|\fF_{t-1}]$. Note that by assumption $0 \leq \Delta_t \leq S$, hence, $D_t \leq S$. Further, by the Bhatia-Davis inequality,
	\begin{align*}
	\Var[D_t|\fF_{t-1}] = \Var[\Delta_t|\fF_{t-1}] \leq \EE[\Delta_t |\fF_{t-1}](S- \EE[\Delta_t |\fF_{t-1}]) \text{ ,}
	\end{align*}
	and consequently,
	\begin{align*}
	V_T = \sum_{t=1}^T \Var[D_t|\fF_{t-1}] \leq S \sum_{t=1}^T \EE[\Delta_t |\fF_{t-1}] \text{ .}
	\end{align*}
	Now we apply Lemma \ref{lemma: kakade_cocentration_lemma} to find that with probability at least $1-\delta$,
	\begin{align*}
	\sum_{t=1}^T D_t & \leq \max\left\{4S \log\tfrac{2 \log(T) + 2}{\delta}, 2 \sqrt{V_T \log \tfrac{2 \log(T) + 2}{\delta}}\right\}\\
	&\leq \max\left\{4S \log\tfrac{2 \log(T) + 2}{\delta}, 2 \sqrt{S \sum_{t=1}^T \EE[\Delta_t | \fF_{t-1}] \log \tfrac{2 \log(T) + 2}{\delta}}\right\}\\
	&\leq \frac{1}{4} \sum_{t=1}^T \EE[\Delta_t | \fF_{t-1}] + 4 S\log \tfrac{2 \log(T) + 2}{\delta}
	\end{align*}
	where for the last inequality we have used $2 \sqrt{ab}\leq a + b$ for $a,b \geq 0$. This proves the first part of the lemma. Taking the union bound over all time steps $T \geq 1$, effectively replacing $\delta$ by $\frac{6 \delta}{\pi^2 T^2}$, gives the anytime bound.
\end{proof}

\paragraph{Step 2) Application of Cauchy-Schwarz.}
In the next lemma, we bound the sum of expected instantaneous regret using the information-regret ratio and the expected information gain. 
\begin{lemma} \label{lemma: expected regret by information ratio}
	\begin{align*}
	\sum_{t=1}^T \EE[\Delta_t|\fF_t] &= \sum_{t=1}^T \sqrt{\Psi_t} \sqrt{\EE[I_t |\fF_{t-1}]} \leq \sqrt{\sum_{t=1}^T \Psi_t \cdot \sum_{t=1}^T \EE[I_t|\fF_{t-1}]} \text{ .}
	\end{align*}
\end{lemma}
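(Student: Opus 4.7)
The plan is to prove the lemma in two short steps: first establish the equality by unpacking the definition of the regret-information ratio, and then apply the Cauchy--Schwarz inequality to obtain the displayed bound.

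For the equality, I would recall that the regret-information ratio at the sampling distribution $\pi_t$ is defined as
\[
\Psi_t = \Psi_t(\pi_t) = \frac{\EE_{\pi_t}[\Delta(x) \mid \fF_{t-1}]^2}{\EE_{\pi_t}[I_t(x) \mid \fF_{t-1}]} = \frac{\EE[\Delta_t \mid \fF_{t-1}]^2}{\EE[I_t \mid \fF_{t-1}]},
\]
where in the second equality I used that $x_t \sim \pi_t$ conditionally on $\fF_{t-1}$, so the conditional expectations under $\pi_t$ coincide with conditional expectations given $\fF_{t-1}$. Rearranging gives $\EE[\Delta_t \mid \fF_{t-1}] = \sqrt{\Psi_t}\,\sqrt{\EE[I_t \mid \fF_{t-1}]}$, from which the claimed equality follows term-by-term in the sum. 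A small technical note is that this requires $\EE[I_t \mid \fF_{t-1}] > 0$; if it vanishes on some event, then by the non-negativity of $I_t$ and $\Delta_t \geq 0$ together with the definition of $\Psi_t$ via the $x^2/y$ convention on $\RR \times \RRp$, one can handle this degenerate case by a standard convention (e.g., define $\sqrt{\Psi_t}\sqrt{\EE[I_t\mid \fF_{t-1}]} = 0$), consistent with how $\Psi_t$ is handled elsewhere in the paper.

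For the inequality, I would apply Cauchy--Schwarz to the finite sequences $a_t = \sqrt{\Psi_t}$ and $b_t = \sqrt{\EE[I_t \mid \fF_{t-1}]}$:
\[
\sum_{t=1}^T a_t b_t \leq \sqrt{\sum_{t=1}^T a_t^2}\,\sqrt{\sum_{t=1}^T b_t^2} = \sqrt{\sum_{t=1}^T \Psi_t \cdot \sum_{t=1}^T \EE[I_t \mid \fF_{t-1}]},
\]
which is exactly the stated bound. There is no real obstacle here; the lemma is essentially a bookkeeping step that isolates the role of $\Psi_t$ and prepares the ground for later bounding $\sum_{t=1}^T \EE[I_t \mid \fF_{t-1}]$ by $\gamma_T$ via Lemma~\ref{lemma: concentration of conditinal means}. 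The only subtlety worth flagging is the degenerate-denominator case above, which is why I would explicitly note the convention rather than silently dividing.
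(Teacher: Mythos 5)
Your proof is correct and follows exactly the route the paper takes: the equality comes from unpacking the definition of $\Psi_t$ (using that $x_t \sim \pi_t$ conditionally on $\fF_{t-1}$), and the inequality is Cauchy--Schwarz applied to $a_t = \sqrt{\Psi_t}$, $b_t = \sqrt{\EE[I_t\mid\fF_{t-1}]}$. The paper's own proof is a one-line version of the same argument; your explicit handling of the degenerate-denominator case is a reasonable extra precaution but not a substantive difference.
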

\begin{proof}
	First use the definition of $\Psi_t$ and then apply Cauchy-Schwarz.
\end{proof}

\paragraph{Step 3) Bounding the expected information gain.}
From the last step, we are left with the sum of expected information gains, $\sum_{t=1}^T \EE[I_t|\fF_{t-1}]$. Remember that $\sum_{t=1}^T I_t \leq \gamma_T$. Using Lemma \ref{lemma: concentration of conditinal means}, we show that the sum of expected information gains $\sum_{t=1}^T \EE[I_t|\fF_{t-1}]$ is similarly bounded.

\begin{lemma}\label{lemma: concentration of expected information gain}
	With probability at least $1-\delta$,
	\begin{align*}
	\forall\; T \in \NN, \quad \sum_{t=1}^T \EE[I_t|\fF_{t-1}] &\leq 2\gamma_T + 4 \gamma_T \log \frac{1}{\delta} + 8 \gamma_T \log(4\gamma_T) + 1 \text{ ,}
	\end{align*}
	and if further $I_t \leq 1$ holds, then, with probability at least $1-\delta$,
	\begin{align*}
	\forall\; T \in \NN, \quad\sum_{t=1}^T \EE[I_t|\fF_{t-1}] &\leq 2\gamma_T + 4 \log \frac{1}{\delta} + 8 \log(4) + 1\text{ .}
	\end{align*}
\end{lemma}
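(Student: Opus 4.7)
The plan is to apply Lemma \ref{lemma: concentration of conditinal means} directly to the non-negative stochastic process $X_t = I_t(x_t)$, with conditional means $m_t = \EE[I_t(x_t)\,|\,\fF_{t-1}]$. The definition of $\gamma_T$ as an essential supremum ensures that $\sum_{t=1}^T I_t(x_t) \leq \gamma_T$ holds almost surely, and moreover that $\gamma_t$ is a (deterministic, non-random, non-decreasing) pointwise upper bound on $I_t(x_t)$, which is what Lemma \ref{lemma: concentration of conditinal means} requires for its sequence $(b_t)$.

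For the second inequality, the assumption $I_t \leq 1$ lets us take $b_t = 1$, so the hypothesis $b_T \geq 1$ in the second part of Lemma \ref{lemma: concentration of conditinal means} is trivially satisfied. Plugging in yields, with probability at least $1-\delta$ and uniformly in $T$,
\begin{align*}
\sum_{t=1}^T m_t \;\leq\; 2\sum_{t=1}^T I_t(x_t) + 4 \log \tfrac{1}{\delta} + 8\log 4 + 1 \;\leq\; 2\gamma_T + 4\log\tfrac{1}{\delta} + 8 \log 4 + 1,
\end{align*}
using $\sum_{t=1}^T I_t(x_t) \leq \gamma_T$ in the last step. This is exactly the claimed form.

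For the general inequality, we take $b_t = \gamma_t$, which is non-decreasing and satisfies $I_t(x_t) \leq \gamma_t$ almost surely. Provided $\gamma_T \geq 1$, the second part of Lemma \ref{lemma: concentration of conditinal means} gives
\begin{align*}
\sum_{t=1}^T m_t \;\leq\; 2 \sum_{t=1}^T I_t(x_t) + 4 \gamma_T \log \tfrac{1}{\delta} + 8\gamma_T \log(4 \gamma_T) + 1 \;\leq\; 2\gamma_T + 4\gamma_T \log\tfrac{1}{\delta} + 8\gamma_T \log(4\gamma_T) + 1,
\end{align*}
again uniformly in $T$ with probability $1-\delta$. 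The case $\gamma_T < 1$ can be absorbed by applying the inequality with $b_t = \max(1,\gamma_t)$, at the cost of only changing the leading constant harmlessly.

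I do not expect any real obstacle here, since the work has already been done inside Lemma \ref{lemma: concentration of conditinal means}. The only delicate point is to confirm that $\gamma_t$ is indeed a valid (non-random) upper bound for the process $I_t(x_t)$; this follows because $\gamma_T$ is defined as the essential supremum of $\sum_{t=1}^T I_t(x_t)$ over the underlying stochastic process, and in particular is a deterministic constant (depending on the fixed policy $\pi$) which dominates each partial sum almost surely.
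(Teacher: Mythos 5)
Your proof is correct and follows exactly the paper's route: both cases are direct applications of Lemma \ref{lemma: concentration of conditinal means} to $X_t = I_t(x_t)$, with $b_t = \gamma_t$ (via the conservative bound $I_t \leq \sum_{s=1}^t I_s \leq \gamma_t$) in the general case and $b_t = 1$ when $I_t \leq 1$. Your remark about the edge case $\gamma_T < 1$ is a point the paper silently glosses over, so if anything you are slightly more careful than the original.
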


\begin{proof}\textbf{of Lemma \ref{lemma: concentration of expected information gain}.} Remember that $\sum_{t=1}^T I_t \leq \gamma_T$ by definition. The proof of the lemma is a direct application of Lemma \ref{lemma: concentration of conditinal means}, first with the conservative bound $I_t \leq \sum_{s=1}^t I_s \leq \gamma_t$, and then with the assumption $I_t \leq 1$.
\end{proof}

\paragraph{Step 4) Completing the proof.}
	For the proof of the theorem, we take the union bound over the events such that Lemma \ref{lemma: bound regret by instantaneous regret} and Lemma \ref{lemma: concentration of expected information gain} simultaneously hold, effectively replacing $\delta$ by $\frac{\delta}{2}$ in respective statements. Applying Lemma \ref{lemma: bound regret by instantaneous regret}, \ref{lemma: expected regret by information ratio} and \ref{lemma: concentration of expected information gain} in order, we find
	\begin{align*}
	R_T &\leq \tfrac{5}{4}  \sum_{t=1}^T \EE[\Delta_t|\fF_{t-1}] + 4 S \log\left(\tfrac{8\pi^2 T^2}{3\delta} (\log(T) + 1)\right)\\
	& \leq \tfrac{5}{4}  \sqrt{\sum_{t=1}^T \Psi_t \; \sum_{t=1}^T \EE[I_t|\fF_{t-1}]} + 4 S \log\left(\tfrac{8\pi^2 T^2}{3\delta} (\log(T) + 1)\right)\\
	& \leq \tfrac{5}{4}  \sqrt{\sum_{t=1}^T \Psi_t \; \left(2\gamma_T + 4 \gamma_T \log \frac{2}{\delta} + 8 \gamma_T \log(4\gamma_T) + 1\right)} + 4 S \log\left(\tfrac{8\pi^2 T^2}{3\delta} (\log(T) + 1)\right) \text{ ,}
	\intertext{and if $I_t \leq 1$ holds, the last step can be strengthend to}
	R_T &\leq\tfrac{5}{4}  \sqrt{\sum_{t=1}^T \Psi_t \; \left(2\gamma_T + 4 \log \frac{2}{\delta} + 8 \log(4) + 1\right)} + 4 S \log\left(\tfrac{8\pi^2 T^2}{3\delta} (\log(T) + 1)\right) \text{ .}
	\end{align*}
	This completes the proof. \jmlrQED
	\noindent Dropping all constants, we find
	\begin{align*}
	R_T & \leq \oO\left( \sqrt{\sum_{t=1}^T \Psi_t \;\left(\gamma_T \log \frac{1}{\delta} +  \gamma_T \log(\gamma_T) \right)} +  S \log\left(\tfrac{T \log(T)}{\delta} \right)\right) \text{ ,}
	\intertext{and if $I_t \leq 1$ holds}
	R_T &\leq \oO\left(\sqrt{\sum_{t=1}^T \Psi_t \; \left(\gamma_T +  \log \frac{1}{\delta}\right)} + S \log\left(\tfrac{T \log(T)}{\delta}\right) \right) \text{ .}
	\end{align*}
	
	We remark that in Step 3) in our proof, a dependence on $\log \frac{1}{\delta}$ is unavoidable, as shown by the following example.
\begin{example}\label{exa: counter example conditional means almost surely}
	Define a stochastic process $I_t$, such that $I_t \sim \text{Ber}(\frac{1}{2})$, if $I_1 = 0, \dots, I_{t-1} = 0$, and $I_t= 0$ else. Then $\sum_{t=1}^T \EE_{\mu}[I_t|\fF_{t-1}] = \Omega(T)$ with positive probability, but $\sum_{t=1}^T I_t \leq 1$ always holds.
\end{example}

\section{Properties of the Information Regret Ratio}\label{app: properties of the regret-information ratio}
Here we present details on the properties of the regret-information ratio and its surrogate. In the next lemmas, we show that a minimizer of $\Psi_t^+(\mu)$ always exists, and in fact can be chosen to be supported on two actions, and further, that the regret-information ratio is a convex function of $\mu \in \pP(\xX)$. Formally, we assume that $\xX$ is a compact, metric space and $\Delta:\xX\rightarrow \RRnn$, $g:\xX \rightarrow \RRnn$ are continuous, bounded functions, such that $g$ is not zero everywhere. In the following, we let $\Psi : \pP(\xX) \rightarrow \RR$ be any function of the form $\Psi(\mu) = \frac{\EE_\mu[\Delta(x)]^2}{\EE_{\mu}[g(x)]}$, which includes both the regret-information ratio and its surrogate. Note however, that minimizing the actual regret-information ratio is a trivial matter (assuming that $\Delta$ is known), as for $x^* \in \argmax_{x \in \xX}f(x)$, we have by definition $\Delta(x^*)=0$, and therefore $\min_{x \in \xX} \Psi_t(x) = 0$; but this is of course not true in general for the surrogate $\Psi_t^+$.

\begin{lemma}\label{lemma: minimizer of psi exists}
 	There exists a $\mu^* \in \pP(\xX)$ such that $\Psi(\mu^*) = \inf_{\mu \in \pP(\xX)} \Psi(\mu)$.
\end{lemma}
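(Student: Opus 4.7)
The plan is a standard compactness argument on $\pP(\xX)$. Since $\xX$ is a compact metric space, Prokhorov's theorem implies that $\pP(\xX)$, equipped with the topology of weak convergence, is sequentially compact (and metrizable). Both $\Delta$ and $g$ are bounded continuous, so by the Portmanteau theorem the evaluation functionals $\mu \mapsto \EE_\mu[\Delta(x)]$ and $\mu \mapsto \EE_\mu[g(x)]$ are continuous on $\pP(\xX)$. The difficulty is that $\Psi$ itself is a ratio and need not be continuous where the denominator vanishes, so compactness alone is not enough.

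I would take a minimizing sequence $\mu_n \in \pP(\xX)$ with $\Psi(\mu_n) \downarrow p := \inf_{\mu \in \pP(\xX)} \Psi(\mu)$ and, by Prokhorov, extract a weakly convergent subsequence (not relabelled) with $\mu_n \to \mu^* \in \pP(\xX)$. Write $\alpha_n = \EE_{\mu_n}[\Delta(x)]$, $\beta_n = \EE_{\mu_n}[g(x)]$ and $\alpha^* = \EE_{\mu^*}[\Delta(x)]$, $\beta^* = \EE_{\mu^*}[g(x)]$, so that $\alpha_n \to \alpha^*$ and $\beta_n \to \beta^*$. Since $g$ is continuous, non-negative, and not identically zero on the compact set $\xX$, there exists $x_0 \in \xX$ with $g(x_0) > 0$, so $p \leq \Psi(\delta_{x_0}) = \Delta(x_0)^2/g(x_0) < \infty$.

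The main case is $\beta^* > 0$: there the map $(a,b)\mapsto a^2/b$ is continuous at $(\alpha^*,\beta^*)$, hence $\Psi(\mu^*) = (\alpha^*)^2/\beta^* = \lim_n \alpha_n^2/\beta_n = p$, and $\mu^*$ is the desired minimizer. The remaining case $\beta^* = 0$ is the main obstacle, since $\Psi$ is genuinely discontinuous there, and I would dispatch it by showing that the infimum must in fact be zero and is realised by a harmless minimizer. Since $p < \infty$, $\alpha_n^2 = \Psi(\mu_n)\,\beta_n \to p \cdot 0 = 0$, so $\alpha^* = 0$; moreover, testing against the mixture $\nu_\epsilon = (1-\epsilon)\mu^* + \epsilon \delta_{x_0}$ yields $\Psi(\nu_\epsilon) \leq \epsilon\,\Delta(x_0)^2/g(x_0) \to 0$, which forces $p = 0$. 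Under the natural convention $\Psi(\mu) := 0$ whenever $\EE_\mu[\Delta(x)] = 0$ (which is consistent with $\mu$ incurring zero expected regret, and is the continuous extension from the side $\alpha_n^2/\beta_n \to 0$), the limit $\mu^*$ is a minimizer; equivalently, whenever the set $\{x : \Delta(x) = 0,\, g(x) > 0\}$ is non-empty any Dirac supported there is a minimizer with $\beta^* > 0$, so that one never needs to invoke the convention in non-degenerate situations. Assembling the two cases completes the proof.
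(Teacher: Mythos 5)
Your proof is correct and follows essentially the same route as the paper's: a minimizing sequence, Prokhorov's theorem to extract a weakly convergent subsequence, continuity of $\mu \mapsto \EE_\mu[\Delta]$ and $\mu \mapsto \EE_\mu[g]$ under weak convergence, and continuity of $(a,b) \mapsto a^2/b$ away from $b=0$. The only organizational difference is that the paper peels off the degenerate case up front (if $\Delta$ vanishes somewhere, a Dirac mass there is already a minimizer under the $0/0=0$ convention, and otherwise $\Delta \geq \epsilon > 0$ forces $\EE_{\mu^*}[g] > 0$ at the limit), whereas you handle it at the end via the case $\beta^*=0$; both resolve that corner with the same convention.
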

\begin{proof} Clearly $\Psi(\mu) \geq 0$, and if there is a point $x \in \xX$ with $\Delta(x) = 0$, we have $\Psi(\mu^*) = 0$ for $\mu^* = \delta_x$, with the convention that $\tfrac{0}{0} = 0$. Therefore, without loss of generality, we can assume that $\Delta(x) \geq \epsilon > 0$ for all $x \in \xX$.
	
	Define $h : \RR\times\RRp \rightarrow \RR, (x,y) \mapsto x^2/y$, such that $\Psi(\mu) = h(\EE_{\mu}[\Delta], \EE_{\mu}[g])$. Denote by $\Psi^* = \inf_{\mu \in \pP(\xX)} \Psi(\mu)$ and pick a sequence $\mu_n$ such that $\Psi(\mu_n) \rightarrow \Psi^*$ as $n \rightarrow \infty$. By the assumption that $\Delta \geq \epsilon$ and $g$ is not zero everywhere, $\Psi^* < \infty$, and in particular $\EE_{\mu_n}[g] > 0$ for $n$ large enough. Since $\mu_n$ is a probability distribution over a compact, metric space, the sequence is trivially \emph{tight}. We apply Prokhorov's theorem \cite[]{ProkhorovConvergenceRandomProcesses1956}, which guarantees the existence of a subsequence $\mu_{n_i}$ converging weakly to some $\mu^* \in \pP(\xX)$ (in fact, $\pP(\xX)$ is compact in the weak* topology). By definition of weak convergence (or weak* convergence from the point of view of functional analysis), $\int s(x) \,d\mu_{n_i} \rightarrow \int s(x) d \mu$ for every continuous bounded function $s: \xX \rightarrow \RR$. Hence $\EE_{\mu_{n_i}}[\Delta] \rightarrow \EE_{\mu^*}[\Delta]$, and the same for  $\EE_{\mu_{n_i}}[g] \rightarrow \EE_{\mu^*}[g]$. Finally, continuity of $h$ gives 
	\begin{align*}
	\Psi^* = \lim_{i \rightarrow \infty} h(\EE_{\mu_{n_i}}[\Delta], \EE_{\mu_{n_i}}[g]) = h(\EE_{\mu^*}[\Delta], \EE_{\mu^*}[g]) = \Psi(\mu^*) \text{ ,}
	\end{align*} 
	which proves the claim.
\end{proof}

In the next two lemmas, we extend Proposition 6 of \cite{RussoLearningOptimizeInformationDirected2014} to the case of compact action sets $\xX$.
\begin{lemma}\label{lemma: psi is convex}
	$\Psi(\mu)$ is convex in $\mu$.
\end{lemma}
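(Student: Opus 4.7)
The plan is to reduce the statement to the well known fact that the function $h:\RR\times\RRp\to\RR$ defined by $h(x,y)=x^2/y$ is (jointly) convex, which was already pointed out in the text right before the lemma. The representation I will exploit is $\Psi(\mu) = h\big(\EE_\mu[\Delta(x)],\EE_\mu[g(x)]\big)$, which rewrites $\Psi$ as the composition of $h$ with an affine map from $\pP(\xX)$ into $\RR\times\RRp$.

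First, I would record (or verify) convexity of $h$ on $\RR\times\RRp$. A quick derivation: $h$ is the perspective of the convex function $x\mapsto x^2$, and perspectives of convex functions are convex; alternatively, one computes the Hessian
\begin{align*}
\nabla^2 h(x,y) = \begin{pmatrix} 2/y & -2x/y^2 \\ -2x/y^2 & 2x^2/y^3 \end{pmatrix},
\end{align*}
whose trace is positive and whose determinant is zero, hence it is positive semidefinite on $\RR\times\RRp$.

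Next, I use that expectation is linear in the measure: for any $\mu,\nu\in\pP(\xX)$ and $\lambda\in[0,1]$, the mixture $\mu_\lambda := \lambda\mu + (1-\lambda)\nu$ satisfies $\EE_{\mu_\lambda}[\Delta] = \lambda\EE_\mu[\Delta] + (1-\lambda)\EE_\nu[\Delta]$ and analogously for $g$. Applying convexity of $h$ to these two convex combinations yields
\begin{align*}
\Psi(\mu_\lambda) = h\big(\EE_{\mu_\lambda}[\Delta],\EE_{\mu_\lambda}[g]\big) \leq \lambda\, h\big(\EE_\mu[\Delta],\EE_\mu[g]\big) + (1-\lambda)\, h\big(\EE_\nu[\Delta],\EE_\nu[g]\big) = \lambda\Psi(\mu) + (1-\lambda)\Psi(\nu),
\end{align*}
which is exactly the claimed convexity.

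The only mild subtlety is handling the boundary case where $\EE_\mu[g]=0$ or $\EE_\nu[g]=0$. Since $g\geq 0$, such a case forces $\mu$ (resp. $\nu$) to be supported on the zero set of $g$, and with the convention $0/0=0$ used earlier in the paper the corresponding term $\Psi(\mu)$ is zero only if also $\EE_\mu[\Delta]=0$; otherwise $\Psi(\mu)=+\infty$ and the inequality holds trivially. So the argument goes through on all of $\pP(\xX)$, using the same domain convention fixed in the proof of Lemma~\ref{lemma: minimizer of psi exists}. I do not expect any real obstacle here—the content of the lemma is entirely captured by convexity of the scalar function $h$ combined with linearity of expectation.
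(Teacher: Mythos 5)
Your proof is correct and follows essentially the same route as the paper's: both establish convexity of $h(x,y)=x^2/y$ on $\RR\times\RRp$ (the paper cites Boyd and Vandenberghe, you additionally give the perspective/Hessian justification) and then compose with the affine map $\mu \mapsto (\EE_\mu[\Delta],\EE_\mu[g])$. Your extra remark on the boundary case $\EE_\mu[g]=0$ is a harmless refinement the paper leaves implicit.
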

\begin{proof}
	The proof is completely analogs to the proof of Proposition 6 of \cite{RussoLearningOptimizeInformationDirected2014}. Define $h : \RR\times \RRp \rightarrow \RR, (x,y) \mapsto x^2/y$. As shown in Chapter 3 of \cite{BoydConvexoptimization2004},  $h$ is convex on its domain. Further, $f:\pP(X) \rightarrow \RR^2, \mu \mapsto (\EE_{\mu}[\Delta], \EE_{\mu}[g])$ is an affine function. Since $\Psi(\mu) = h(f(\mu))$ is a composition of a convex and an affine function, $\Psi$ is convex, too.
\end{proof}
\begin{lemma*}\textnormal{\textbf{\ref{lemma: minimizer supported on at most two atoms}} \textnormal{(in Section \ref{section: information directed sampling}})} 
	There exists a measure $\mu^* \in \argmin_{\mu \in \pP(\xX)} \Psi(\mu)$ with $|\supp(\mu^*)| \leq 2$, i.e.\ a measure which achieves the minimum value of $\Psi$ and is supported on at most two actions.
\end{lemma*}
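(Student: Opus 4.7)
The plan is to reduce the infinite-dimensional minimization over $\pP(\xX)$ to a two-dimensional geometric problem. Observe that $\Psi(\mu)$ depends on $\mu$ only through the pair $(a, b) := (\EE_\mu[\Delta], \EE_\mu[g])$, so $\Psi(\mu) = h(a, b)$ with $h(a, b) := a^2/b$. The set of achievable pairs is
\[
C := \bigl\{(\EE_\mu[\Delta], \EE_\mu[g]) : \mu \in \pP(\xX)\bigr\} = \mathrm{conv}\bigl\{(\Delta(x), g(x)) : x \in \xX\bigr\} \subset \RR^2,
\]
which is compact and convex by continuity of $\Delta, g$ and compactness of $\xX$. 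Any measure $\mu'$ whose expectation pair equals that of a minimizer $\mu^*$ is itself a minimizer, so the task reduces to realizing $(a^*, b^*) := (\EE_{\mu^*}[\Delta], \EE_{\mu^*}[g])$ by a two-atom measure.

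I would take $\mu^*$ from Lemma \ref{lemma: minimizer of psi exists}. If $a^* = 0$, which in particular covers the case of any $x_0 \in \xX$ with $\Delta(x_0) = 0$, then $\delta_{x_0}$ is already a one-atom minimizer. Otherwise $a^* > 0$, and $b^* > 0$ because $\Psi(\mu^*) < \infty$, so the gradient $\nabla h(a^*, b^*) = \bigl(2a^*/b^*,\, -(a^*)^2/(b^*)^2\bigr)$ is nonzero. I then claim that $(a^*, b^*)$ lies on the boundary of $C$: otherwise, perturbing a short distance from $(a^*, b^*)$ in the direction $-\nabla h(a^*, b^*)$ would remain inside $C$ while strictly decreasing $h$, contradicting the optimality of $\mu^*$. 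When $C$ has empty interior it is already contained in a line and every point counts as a boundary point for the next step.

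The final step invokes Carathéodory's theorem in boundary-point form: if $p$ is a boundary point of $\mathrm{conv}(K)$ for compact $K \subset \RR^2$, then any supporting line $H$ through $p$ gives $p \in H \cap \mathrm{conv}(K) = \mathrm{conv}(H \cap K)$, and Carathéodory applied in the one-dimensional set $H \cap K$ expresses $p$ as a convex combination of at most two points of $K$. Applying this with $p = (a^*, b^*)$ and $K = \{(\Delta(x), g(x)) : x \in \xX\}$ yields $x_1, x_2 \in \xX$ and $\lambda \in [0,1]$ with
\[
(a^*, b^*) = \lambda (\Delta(x_1), g(x_1)) + (1-\lambda)(\Delta(x_2), g(x_2)).
\]
Setting $\mu^{**} := \lambda \delta_{x_1} + (1-\lambda)\delta_{x_2}$ gives $\Psi(\mu^{**}) = h(a^*, b^*) = \Psi(\mu^*)$, and $|\supp(\mu^{**})| \leq 2$.

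The main technical difficulty compared to the finite-action case treated in \citep{RussoLearningOptimizeInformationDirected2014} is that $\xX$ may be an uncountable continuum, so the polyhedral combinatorics on the probability simplex is unavailable. Passing to the two-dimensional shadow $C$ sidesteps this entirely: continuity of $\Delta, g$ and compactness of $\xX$ make $C$ a well-behaved compact convex subset of the plane, and Carathéodory-on-the-boundary delivers the two-point support without any further work on extreme points of $\mathrm{conv}(K)$.
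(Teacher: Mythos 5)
Your proof is correct, but it takes a genuinely different route from the paper's. The paper argues variationally on the measure itself: starting from a minimizer $\mu$, it shows (by transferring mass between small neighborhoods inside $\supp(\mu)$ and differentiating the linearized objective $\eta(\mu) = \EE_\mu[\Delta]^2 - \Psi^*\EE_\mu[g]$) that the function $s(x) = 2\,\EE_\mu[\Delta]\Delta(x) - \Psi^* g(x)$ is constant on $\supp(\mu)$, and then interpolates between the points of the support where $g$ is smallest and largest to build a two-atom measure with the same moments. You instead project everything onto the two-dimensional moment set $C = \mathrm{conv}\{(\Delta(x), g(x)) : x \in \xX\}$, observe that a minimizer's moment pair must be a boundary point of $C$ (else a first-order step along $-\nabla h$ improves the objective), and recover a two-point representation from the supporting-line version of Carath\'eodory's theorem. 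Your argument is cleaner and sidesteps the paper's measure-restriction and differentiation technicalities entirely; it also makes transparent that the "two" comes from the objective depending on exactly two moments, so it generalizes immediately to objectives of $k$ moments with support size $k$. What the paper's route buys in exchange is the extra structural fact that $g$ is an affine function of $\Delta$ on the support of any (not just some) minimizer, which is not visible from the Carath\'eodory argument. Two small points to tighten: in the case $\EE_{\mu^*}[\Delta] = 0$ you should say explicitly that continuity of $\Delta \geq 0$ forces $\Delta(x_0) = 0$ for some $x_0 \in \supp(\mu^*)$ (rather than presupposing such an $x_0$ exists), and you should note $b^* > 0$ follows from finiteness of the optimal value before invoking smoothness of $h$ at $(a^*, b^*)$ — both of which you essentially have.
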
 
\begin{proof} The proof is an extension of Proposition 6 in \citep{RussoLearningOptimizeInformationDirected2014}, where we deal with the technicalities of having a compact action set.
	
	Define $\Psi^* = \min_{\mu \in \pP(\xX)} \Psi(\mu)$. Again, without loss of generality, we can assume that $\Delta(x) \geq \epsilon > 0$ for all $x\in \xX$, as otherwise we immediately get a measure supported on a single action, such that $\Psi^* = 0$. In particular, for what follows, we can assume that $\Psi^* > 0$. Define $\eta(\mu) = \EE_\mu[\Delta]^2  - \EE_\mu[g]\Psi^*$. As a first step, we show that $\eta$ has the same minimizers as $\Psi$. 
	Note that $\eta(\mu) \geq 0$, and if $\Psi(\mu) = \Psi^*$, then $\eta(\mu) = 0$, which shows one direction. For the converse, assume that $\mu$ minimizes $\eta$, i.e.\ $\eta(\mu) = 0$. This immediately gives $\Psi(\mu) = \Psi^*$. 
	
	Now, let $\mu$ be any minimizer of $\Psi$, and therefore also a minimizer of $\eta$. Remember that by definition of the support, for any $x \in \supp(\mu)$ and any neighborhood $N_x$ of $x$, $\mu(N_x) > 0$. We claim that the function 
	\begin{align*}
	s : \xX \rightarrow \RR,\quad s(x) = 2\; \EE_\mu[\Delta] \Delta(x)- \Psi^* g(x)
	\end{align*}
	is constant on $\supp(\mu)$. Suppose there exist $x,y \in \supp(\mu)$ such that $s(x) < c < s(y)$ for some $c \in \RR$. Then, by continuity of $s$, we can find disjoint neighborhoods $N_x$ and $N_y$ of $x$ and $y$ respectively, such that for all $x' \in N_x$ and $y' \in N_y$, $s(x') < c < s(y')$. 

	Let $\mu_x$, $\mu_y$ and $\mu_\bot$ be the measures obtained by restricting $\mu$ to $N_x$, $N_y$ and $(N_x \cup N_y)^c$ respectively, and let $\alpha_x = \mu_x(N_x)$ and $\alpha_y = \mu_y(N_y)$. By the definition of the support, $\alpha_x > 0$ and $\alpha_y > 0$ (in particular, $\mu_x/\alpha_x$ and $\mu_y/\alpha_y$ are the conditional probability distributions of $\mu$ given $N_x$ and $N_y$ respectively). Note that for our choice of $x$ and $y$,
	\begin{align}
	\frac{1}{\alpha_x}	\int_{N_x} s(x)\; d\mu_x < c  < \frac{1}{\alpha_y} \int_{N_y} s(y)\; d\mu_y \text{ .}\label{eq: E[s] on N_x and N_y different}
	\end{align}
	Define for $\lambda \in [-\alpha_x^{-1}, \alpha_y^{-1}]$ a parametrized measure
	\begin{align*}
	\mu_\lambda = \mu_\bot + (1 - \lambda \alpha_y) \mu_x + (1+ \lambda \alpha_x)\mu_y \text{ .}
	\end{align*}
	Clearly $\mu_\lambda(\xX) = 1$, hence $\mu_\lambda$ defines a probability measure for $-\alpha_x^{-1} \leq \lambda \leq \alpha_y^{-1}$ such that $\mu_0 = \mu$. Intuitively, by varying $\lambda$ we transfer probability mass from $N_x$ to $N_y$. With a slight abuse of notation, we write $\eta(\lambda) = \eta(\mu_\lambda)$. Because $\mu$ is a minimizer of $\eta$, it must be that $\frac{d}{d\lambda}\eta(\lambda)|_{\lambda=0} = 0$. We have that
	\begin{align*}
	\eta(\lambda) &= \EE_{\mu_\lambda}[\Delta]^2 - \Psi^*\;\EE_{\mu_\lambda}[g] \\
	&= \EE_{\mu_\bot + (1- \lambda\alpha_y) \mu_x + (1 + \lambda\alpha_x)\mu_y}[\Delta]^2  - \Psi^* \; \EE_{\mu_\bot + (1- t\alpha_y) \mu_x +  (1 + \lambda\alpha_x)\mu_y} [g] \text{ ,}
	\end{align*}
	and we calculate the derivative with respect to $\lambda$,
	\begin{align*}
	\frac{d}{d\lambda}\eta(\lambda) &= 2 \EE_{\mu_\lambda}[\Delta] \int_\xX \Delta \; (\alpha_x d\mu_y - \alpha_yd\mu_x) + \Psi^* \int_\xX g\; (\alpha_x d\mu_y - \alpha_y \mu_x) \text{ .}
	\end{align*}
	Evaluating the previous expression at $\lambda=0$ yields
	\begin{align*}
	0 = 2 \EE_{\mu}[\Delta] \int_\xX \Delta \; (\alpha_x d\mu_y - \alpha_y  d\mu_x ) + \Psi^* \int_\xX g\; (\alpha_x d\mu_y - \alpha_y d\mu_x) \text{,}
	\end{align*}
	which contradicts \eqref{eq: E[s] on N_x and N_y different}. This shows that $d^* := s(x)$ is constant for $x \in \supp(\mu)$, and in particular for all $x \in \supp(\mu)$,
	\begin{align*}
	g(x) = - \frac{d^*}{\Psi^*} + \frac{2\EE_\mu[\Delta]}{\Psi^*} \Delta(x) \text{ .}
	\end{align*}
	To complete the proof, note that $\supp(\mu)$ is compact because the support is always closed and $\xX$ compact; hence we can pick $x_{\min} \in \argmin_{x \in \supp(\mu)} g(x)$ and $x_{\max} \in \argmax_{x \in \supp(\mu)} g(x)$. Let $0 \leq \beta \leq 1$, such that
	\begin{align*}
	\EE_{\mu}[g] = \beta g(x_{\min}) + (1-\beta) g(x_{\max}) \text{ .}
	\end{align*}
	Finally, $\mu^* = \beta \delta_{x_{\min}} + (1-\beta) \delta_{x_{\max}}$ is a distribution supported on $\{x_{\min}, x_{\max}\}$. From the last two equations, it follows that $\EE_{\mu}[g] = \EE_{\mu^*}[g]$, and also  $\EE_{\mu}[\Delta] = \EE_{\mu^*}[\Delta]$. Consequently, $\mu^*$ achieves the same objective value on $\eta$ as $\mu$, and therefore is a minimizer of both $\eta$ and $\Psi^*$.
\end{proof}

\section{Least Squares Estimation with Heteroscedastic Noise} \label{app: least squares estimation with heteroscedastic noise}
\subsection{Least Squares Estimator for RKHS}

Note that for any linear operator $A : \hH \rightarrow \RR^T$,
\begin{align*}
(A^*A + \lambda I_\hH)^{-1}A^* = A^* (AA^* + \lambda I_{\RR^T})^{-1} \text{ .}
\end{align*}
Remember that we defined $(K_T)_{i,j} := (MM^*)_{i,j} = k(x_i,x_j)$ and $k_T(x)_i := (Mk_x)_i = \<x_i, x\> = k(x_i,x)$. Using the reproducing property and the equation above with $A = \Sigma^{-1}M$, we calculate $\hat{\mu}_T(x)$ for any $x \in \RR^d$,
\begin{align*}
\hat{\mu}_T(x) &= \<\hat{\mu}_T, k_x \>_\hH\\
&= \<(M^*\Sigma^{-1}M +  \lambda I_\hH)^{-1} M^*\Sigma^{-1} y, k_x \>_\hH\\
&= \<M^*\Sigma^{-1/2}(\Sigma^{-1/2}MM^*\Sigma^{-1/2} + \lambda I_{\RR^n})^{-1} \Sigma^{-1/2} y, k_x \>_\hH\\
&= \<M^*(MM^* + \lambda \Sigma)^{-1} y, k_x \>_{\hH}\\
&=  \<(MM^* + \lambda \Sigma)^{-1} y, M k_x \>_{\RR^T}\\
&= k_T(x)^\T (K_T + \lambda \Sigma)^{-1}y \text{ .}
\end{align*}
Further, observe that for $A = \Sigma^{-1/2}M$,
\begin{align*}
\lambda(A^*A + \lambda I)^{-1} = I - A^*(A A^* + \lambda I)^{-1}A \text{ .}
\end{align*}
In particular, we have $\lambda \<k_x,k_y\>_{V_T^{-1}} = k(x,y) - k_T(x)^\T (K_T + \lambda \Sigma_T)^{-1}k_T(y)$, hence using $v = k_x = k_y$, we can calculate $\|v\|_{V_T^{-1}}^2$ as follows,
\begin{align*}
\|v\|_{V_T^{-1}}^2 = \frac{1}{\lambda} \left(k(x,x) - k_T(x)^\T(K + \lambda \Sigma)^{-1}k_T(x)\right) = \sigma_T(x)^2 \text{ .}
\end{align*}
\section{Summary of Information Gain Functions}\label{app: information functions}
\paragraph{IDS-F} In the main text, we have defined $I_t^\F(x) = \log\left(1 + \frac{\sigma_t(x)^2}{\rho(x)^2}\right)$, which in the Bayesian setting, with Gaussian prior and likelihood and posterior variance $\sigma_t(x)$, corresponds to the mutual information $\II(x;f|\fF_{t-1})$.
\paragraph{IDS-UCB}Remember that we defined $\sigma_t(y|x)^2$ as the confidence width at $y \in \xX$ if we were to evaluate $x\in \xX$ at time $t$. Then IDS-UCB is defined through
\begin{align*}
I_t^\UCB(x) = \log\left(\frac{\sigma_t(x_t^\UCB)^2}{\sigma_t(x_t^\UCB|x)^2}\right) \text{ .}
\end{align*} 
Using the Sherman-Morrison formula, we compute for the linear case, denoting $\sigma_t(x) = \|x\|_{V_t^{-1}}$,
\begin{align*}
\sigma_t(x_t^\UCB|x)^2 = \sigma(x_t^\UCB)^2 - \frac{(x^\T V_t^{-1}x_t^\UCB)^2}{\rho(x)^2 + \sigma_t(x)^2} \text{ .,}
\end{align*}
Using Schur's complement, for RKHS functions we get 
\begin{align*}
\sigma_t(x_t^\UCB|x)^2 = \sigma(x_t^\UCB)^2 - \frac{k_t(x,x_t^\UCB)^2}{\rho(x)^2 + \sigma_t(x)^2} \text{ ,}
\end{align*}
compare Appendix F in \citep{ChaudhuriActiveHeteroscedasticRegression2017}. From the last expression, it is also easy to see that
\begin{align*}
I_t^\UCB(x_t^\UCB)= I_t^\F(x_t^\UCB) =  \log\left(1 + \frac{\sigma(x_t^\UCB)^2}{\rho(x_t^\UCB)^2}\right)\text{ .} 
\end{align*}
\paragraph{IDS-TS}
Similar as IDS-UCB, we define IDS-TS by using the information function
\begin{align*}
I_t^\TS(x) = \log\left(\frac{\sigma_t(x_t^\TS)^2}{\sigma_t(x_t^\TS|x)^2}\right) \text{ ,}
\end{align*} 
where $x_t^\TS$ is a \emph{proposal action} by the Thompson Sampling policy, that is, $x_t^{\TS}$ maximizes a sample $\tilde{f}$ from the posterior distribution of $f$. Intuitively, we use Thompson Sampling to identify an action to gather information on, but the resulting IDS-TS is not restricted to playing $x_t^\TS$. Though currently we do not have a bound for the regret-information ratio of this policy, we point out that if the regret-information ratio of Thompson Sampling is bounded, then by a similar argument as in Lemma \ref{lemma: bounding the regret information surrogate}, one can also bound the regret-information ratio of IDS-TS.

\paragraph{IDS-E}
We also propose the use of the averaged information gain
\begin{align*}
I_t^E(x) = \frac{1}{m} \sum_{i=1}^m \log\left(\frac{\sigma_t(x_{t,i}^\TS)^2}{\sigma_t(x_{t,i}^\TS|x)^2}\right) \text{ ,}
\end{align*}
calculated using $m$ proposals $x_{t,1}^\TS, \dots, x_{t,m}^\TS$ from the Thompson Sampling policy.

\section{Further Numerical Experiments}\label{app: further experiments}

\begin{figure}
	\centering   	
	\hspace{-15pt}
	\subfigure[Simulation of Example \ref{exa: ucb and ts fail}][t]{\label{fig: homoscedastic_C}\includegraphics[scale=1,trim={0 0 0 0},clip]{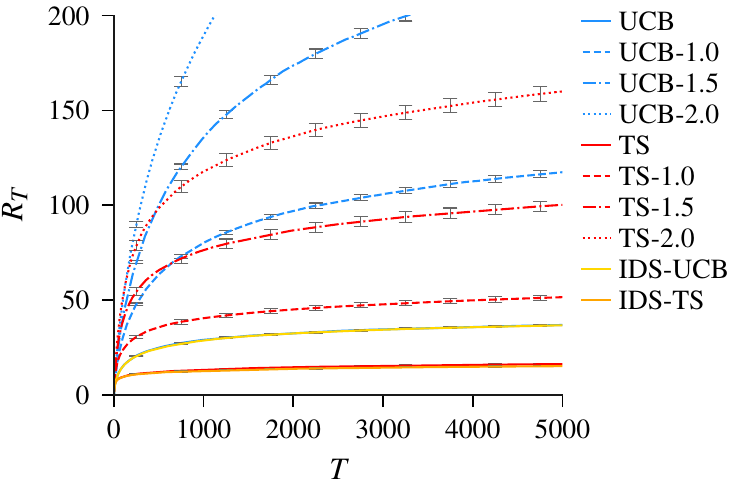}}
	\hspace{10pt}
	\subfigure[Heteroscedastic Noise][t]{\label{fig: heteroscedastic_A}\includegraphics[scale=1,trim={0.5cm 0 0.0cm 0},clip]{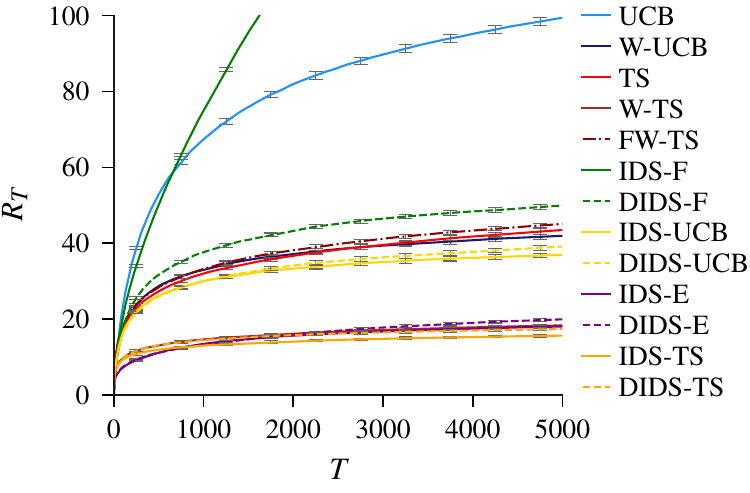}}
	\hspace{-20pt}
	\caption{Left: We show a simulation of Example \ref{exa: ucb and ts fail} with pairs of correlated actions. For each pair, one of the action yields low observation noise $\nN(0, \sigma_0^2)$ with $\sigma_0=0.5$, whereas the second action has increased observation noise $\nN(0, \sigma^2)$ for $\sigma \in \{1, 1.5, 2\}$. Without the second action UCB and IDS-UCB perform similarly, and the same holds true for TS/IDS-TS. Assuming that UCB/TS always commit to the noisier action among each pair, UCB-$\sigma$ and TS-$\sigma$  show the regret achieved given increased noise variance $\sigma^2$. As expected, the performance quickly deteriorates, because the confidence intervals concentrate at a slower rate. On the other hand, IDS-UCB/IDS-TS are guaranteed to always pick the less noisy action among each pair.\\
	Right: We show another instance of our heteroscedastic experiment, including also further variants of IDS. We observed that the randomized versions of IDS-TS, IDS-E and IDS-UCB perform very similar to the respective deterministic DIDS-TS, DIDS-E and DIDS-UCB. Only for IDS-F and DIDS-F the performance is drastically different, where as noted before IDS-F is over-exploring. Again, one can see that W-UCB outperforms UCB, and W-TS outperforms TS, which shows the benefit of using the weighted least squares estimator in the heteroscedastic setting. In this case, W-TS is actually competitive with IDS-TS (which can hardly be seen due to overlapping lines). Finally, IDS-UCB clearly outperforms W-UCB, and IDS-TS achieves the lowest regret among all policies after 5000 steps. }
\end{figure}

We simulated the setting of Example \ref{exa: ucb and ts fail}, and also evaluated the IDS variants proposed in Appendix \ref{app: information functions} in the same basic setup as described in Section \ref{section: experiments}, but with a different sample of randomly generated actions (Figure \ref{fig: homoscedastic_C}, \ref{fig: heteroscedastic_A}). Further, in the heteroscedastic case, we instead choose the noise bound for each action uniformly in $[0.4, 0.6]$, a setting closer to the homoscedastic case. We also included an additional variant of Thomson Sampling (FW-TS), which engages in the oversampling as needed for the frequentist guarantees of \cite{AgrawalThompsonSamplingContextual2013} and \cite{AbeilleLinearThompsonSampling2017}, in addition to using the weighted least squares estimator. IDS-E is averaging over 10 proposals of Thompson Sampling.

We stress that for the strategies TS, WTS, D/IDS-TS and D/IDS-E there are currently no frequentist regret bounds known, still they perform well in the experiments. The difference between the IDS and DIDS versions is small except for IDS-F and DIDS-F, where as previously pointed out, the randomized version seems to engage in over-exploration along the theoretical upper bound. Similar, as IDS-F is much more efficient than DIDS-F in the sense that it achieves a higher information gain in terms if $I_t^F$ while still maintaining the worst-case regret guarantee, we conjecture that there are settings and information functions, where the randomized IDS has lower regret than the deterministic version. As usual, all experiments have to be taken with a grain of salt; of course it is possible to degrade the performance of information directed strategies by setting the noise of a suboptimal action to a very small value; and the other way round, if the optimal action has very little noise, our methods gain an additional advantage. To compile a more thorough experimental study, also including real-world applications, is a task for future work.

\section{Useful Inequalities}\label{app: useful inequalities}
The following lemma gives an upper bound on the variance of a bounded random variable.
\begin{lemma}[\cite{Bhatiabetterboundvariance2000}]\label{lemma: bhatia-davis}
	Let $X$ be a real random variable supported in $[m,M]$. Then, 
	\begin{align*}
	\Var(X) \leq (M-\EE[X])(\EE[X] - m) \text{ ,}
	\end{align*}
	and the bound is tight, if all mass is concentrated on the end-points of the interval.
\end{lemma}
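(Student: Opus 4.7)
The plan is to prove this by observing that because $X$ lies in $[m,M]$, the product $(X-m)(M-X)$ is a non-negative random variable, and then to compute its expectation explicitly. Expanding, I would write
\begin{align*}
0 \leq \EE[(X-m)(M-X)] = (M+m)\EE[X] - \EE[X^2] - mM \text{ .}
\end{align*}
Rearranging gives the bound $\EE[X^2] \leq (M+m)\EE[X] - mM$.

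From there, I would subtract $\EE[X]^2$ on both sides to introduce the variance on the left, and factor the right-hand side. Explicitly,
\begin{align*}
\Var(X) = \EE[X^2] - \EE[X]^2 \leq (M+m)\EE[X] - mM - \EE[X]^2 = (M-\EE[X])(\EE[X]-m) \text{ ,}
\end{align*}
where the last equality is a direct algebraic factorization, which can be verified by expansion.

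Finally, for the tightness claim, I would exhibit the two-point distribution that puts mass $p = (M-\mu)/(M-m)$ on $m$ and mass $1-p = (\mu-m)/(M-m)$ on $M$, where $\mu = \EE[X]$. A short computation shows $\EE[X^2] = (M+m)\mu - mM$ for this distribution, so equality is attained. I expect no real obstacle: the only subtlety is checking that the starting inequality $(X-m)(M-X) \geq 0$ holds almost surely, which is immediate from the support assumption $X \in [m,M]$, and that the probabilities defining the extremal distribution lie in $[0,1]$, which follows from $m \leq \mu \leq M$.
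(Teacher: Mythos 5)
Your proof is correct and complete: the expansion of $\EE[(X-m)(M-X)] \geq 0$, the factorization $(M+m)\mu - mM - \mu^2 = (M-\mu)(\mu-m)$, and the two-point distribution witnessing tightness all check out. The paper itself gives no proof of this lemma — it simply cites \cite{Bhatiabetterboundvariance2000} — so there is no argument to compare against; your derivation is the standard elementary one and could serve as a self-contained proof here.
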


\noindent
The next two lemmas are two technical inequalities.
\begin{lemma}\label{lemma: 2 sqrt ab < a + b}
	Let $a,b \geq 0$. Then $2\sqrt{ab} \leq a + b$.
\end{lemma}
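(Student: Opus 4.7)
This is the classical arithmetic–geometric mean inequality for two non-negative numbers, and the proof is one line. The plan is to observe that the square of any real number is non-negative, and pick that square so that it expands into exactly the expression we want.

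First, I would consider the quantity $(\sqrt{a} - \sqrt{b})^2$, which is well-defined since $a, b \geq 0$, and note that it is non-negative because it is a square of a real number. Expanding gives $a - 2\sqrt{ab} + b \geq 0$, and rearranging terms yields $2\sqrt{ab} \leq a + b$, which is exactly the claim.

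There is essentially no obstacle here; the only thing to be a little careful about is ensuring that $\sqrt{a}$ and $\sqrt{b}$ are real, which follows from the hypothesis $a, b \geq 0$. One could equivalently argue by squaring both sides (both of which are non-negative) and observing that $(a + b)^2 - (2\sqrt{ab})^2 = (a-b)^2 \geq 0$, but the $(\sqrt{a}-\sqrt{b})^2 \geq 0$ route is the most direct.
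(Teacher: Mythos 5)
Your proof is correct and essentially the same as the paper's: the paper squares both sides to reduce the claim to $(a-b)^2 \geq 0$, which is precisely the alternative you mention at the end, and your primary route via $(\sqrt{a}-\sqrt{b})^2 \geq 0$ is a trivially equivalent rearrangement of the same identity.
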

\begin{proof}
	Squaring both sides gives $4 ab \leq a^2 + b^2 + 2 ab$, or equivalently, $0 \leq a^2 + b^2 - 2ab = (a-b)^2$, which is clearly true.
\end{proof}

\begin{lemma}\label{lemma: sqrt ab ln(ab) < a + b ln(b)}
	Let $a,b \geq 1$. Then $\sqrt{ab \log(ab)} \leq a + b \log(b)$.
\end{lemma}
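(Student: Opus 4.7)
The plan is to apply the previous lemma (Lemma \ref{lemma: 2 sqrt ab < a + b}) after an asymmetric split of the product under the square root. Specifically, I would write $ab\log(ab) = a \cdot b\log(ab)$ and use $2\sqrt{xy}\leq x+y$ with $x=a$ and $y = b\log(ab)$ (both non-negative since $a,b\geq 1$) to obtain
\[
\sqrt{ab\log(ab)} \;\leq\; \tfrac{1}{2}\bigl(a + b\log(ab)\bigr) \;=\; \tfrac{1}{2}\bigl(a + b\log(a) + b\log(b)\bigr).
\]
It then suffices to show that this upper bound is at most $a + b\log(b)$, which after clearing the factor of two and cancelling $b\log(b)$ reduces to the single inequality $b\log(a) \leq a + b\log(b)$, equivalently $b\log(a/b) \leq a$.

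This last inequality I would verify by splitting on the sign of $\log(a/b)$: if $b\geq a$ the left side is non-positive and the claim is immediate, and if $a>b\geq 1$ the standard bound $\log(x)\leq x-1$ applied at $x=a/b$ gives $b\log(a/b)\leq b(a/b-1) = a-b \leq a$. Combining these steps yields the claim. I do not anticipate any real obstacle; the only point requiring care is the asymmetric split in the AM-GM step (pairing $a$ with the entire factor $b\log(ab)$ rather than, e.g., splitting as $ab$ times $\log(ab)$), since a symmetric split would produce a $\log(ab)$ term that cannot be absorbed into $b\log(b)$.
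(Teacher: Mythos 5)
Your proof is correct, but it takes a different route from the paper's. The paper squares both sides and splits into cases according to whether $b\log(a)\leq a$: in the first case it expands $(a+b\log(b))^2$ directly, and in the second it deduces $a\leq b^2$ (via $\log(a)\leq\sqrt{a}$), bounds $\log(ab)\leq 3\log(b)$, and only then invokes Lemma \ref{lemma: 2 sqrt ab < a + b}. You instead apply Lemma \ref{lemma: 2 sqrt ab < a + b} once, upfront, with the asymmetric split $a\cdot b\log(ab)$, which collapses the whole problem to the single inequality $b\log(a/b)\leq a$, dispatched by $\log(x)\leq x-1$. The two arguments are related — the paper's case condition $b\log(a)\leq a$ is precisely a sufficient condition for your reduced inequality $b\log(a)\leq a+b\log(b)$ — but yours is more streamlined: it avoids squaring and expanding, replaces the paper's two structurally different cases with one clean elementary fact plus a trivial sign check, and makes explicit why the asymmetric pairing is the right one. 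The paper's version, by contrast, keeps everything in terms of the original quantities and leans on the same auxiliary lemma it has already established, at the cost of a slightly more ad hoc case split. Both proofs are complete and correct.
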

\begin{proof}
By $a,b \geq 1$, we can equivalently show that $ab \log(ab) \leq (a + b\log(b))^2 =  a^2 + 2ab\log(b) + (b \log(b))^2$. Assume first that $b \log(a) \leq a$. Then 
\begin{align*}
	ab \log(ab) = ab \log(a) + ab \log(b) \leq a^2 + ab \log(b) \leq (a + b\log(b))^2 \text{ .}
\end{align*}
Assume now the opposite, i.e.\ $a \leq b \log(a)$. With the inequality $\log(a) \leq \sqrt{a}$, this implies that $a \leq b^2$. Consequently,
\begin{align*}
	ab \log(ab) \leq ab \log(b^3) = 3 ab \log(b) < 4 ab \log(b)
\end{align*}
Hence, by Lemma \ref{lemma: 2 sqrt ab < a + b},
\begin{align*}
	\sqrt{ab \log(ab)} \leq 2 \sqrt{ ab \log(b)} \leq a + b\log(b) \text{ .}
\end{align*}
This completes the proof.
\end{proof}

\end{document}